%% file: main.tex
\documentclass[10pt]{article}

\usepackage[accent=blue, seed=1501]{opentslm}

\definecolor{tnNavy} {HTML}{0C1836}   % headings, dark panels
\definecolor{tnBlue} {HTML}{4E6CEA}   % primary accent (Record, TimeNet layer)
\definecolor{tnTeal} {HTML}{12849C}   % Signal / values plane
\definecolor{tnGreen}{HTML}{2A784E}   % Source / outputs / task types
\definecolor{tnAmber}{HTML}{C07E1A}   % Task
\definecolor{tnPink} {HTML}{B0357C}   % Annotation
\defineaccent{timenet}{3556CF}{4E6CEA}{BAC6F6}{DCE6F7}{F1F4F7}
\setaccent{timenet}
\definecolor{tslmInk}{HTML}{0C1836}   % type in the figure navy

\input{math_commands.tex}

\usepackage{amsmath,amssymb}
\usepackage{booktabs}
\usepackage{tabularx}
\usepackage{multirow}
\usepackage{enumitem}
\usepackage{amsmath,amssymb,amsthm,booktabs,makecell,enumitem,tabularx}
 \theoremstyle{definition}\newtheorem{definition}{Definition}
\theoremstyle{plain}\newtheorem{proposition}{Proposition}

\colorlet{famDesc}{tnBlue}    % descriptive inference
\colorlet{famLoc}{tnAmber}    % temporal localization
\colorlet{famFut}{tnGreen}    % future-directed inference
\colorlet{famRel}{tnTeal}     % relational inference
\colorlet{famSyn}{tnPink}     % temporal synthesis
\usepackage{xcolor,tikz}
\usepackage[edges]{forest}
\usetikzlibrary{decorations.pathreplacing,calc}

\forestset{
  slot/.style={draw=tslmFaint, dashed, line width=0.5pt, fill=white, text=tslmMuted,
               font=\scriptsize\itshape, node options={align=center},
               edge={draw=tslmFaint, dashed, line width=0.5pt}},
}

\usepackage{hyperref}
\usepackage[nameinlink,capitalise]{cleveref}
\usepackage{url}
\usepackage{array}
\usepackage{amsfonts}
\usepackage{nicefrac}
\usepackage{microtype}
\usepackage{graphicx}
\usepackage{float}

\usepackage[acronym]{glossaries}
\makeglossaries

\usepackage{listings}
\usepackage{fancyvrb}
\input{listings/pygments-tango}
\usepackage[most]{tcolorbox}

\newtcolorbox{apipanel}[1]{
  enhanced,
  title={#1},
  fonttitle=\footnotesize\sffamily\fontseries{sb}\selectfont,
  coltitle=tslmInk,
  colbacktitle=tslmAccentWash,
  colback=white,
  colframe=tslmRule,
  boxrule=0.5pt,
  arc=3pt,
  left=4pt,
  right=4pt,
  top=3pt,
  bottom=3pt,
  boxsep=0pt,
  toptitle=2pt,
  bottomtitle=2pt,
  equal height group=api,
}

\input{acronyms}

\usepackage{ragged2e}

\title{\titlehl{TimeNet}: An Extensible Unified Data Infrastructure for
       Next-Generation Temporal Foundation Models}
\shorttitle{TimeNet: Unified Data Infrastructure for Temporal Foundation Models}
\papertype{Preprint}
\date{\today}

\author{%
\au{Martin Maritsch}{1,2,*},
\au{Timo Stoffregen}{1,*},
\au{Thomas Kaar}{1,2,3},
\au{Behsad Riemer}{4},
\au{Maxwell A. Xu}{5},
\au{Max Rosenblattl}{2},
\au{Juncheng Liu}{6},
\au{Nicolas Zumarraga}{2},
\au{Yu Yvonne Wu}{7},
\au{Denys Herasymuk}{8},
\au{Sparsh Rastogi}{9},
\au{Hyungjun Yoon}{10},
\au{Bosong Huang}{11},
\au{Arvind Pillai}{7},
\au{Dmytro Lopushanskyy}{12},
\au{Tony Chen}{3,13},
\au{Robin Deuber}{2},
\au{Yichen Liu}{2,4},\\
\au{Shvat Messica}{14},
\au{Dan Li}{15},
\au{Jian Lou}{15},
\au{Yuwei Zhang}{16},
\au{Jaeho Kim}{17},
\au{Ren\'ee Rosillo Garcia}{18},
\au{Fan Wu}{2},\\
\au{Robert M\"uller}{2},
\au{Elgar Fleisch}{2,19},
\au{Flora D. Salim}{9},
\au{Dimitris Spathis}{5,16},
\au{Yuzhe Yang}{20},
\au{Aaqib Saeed}{21},\\
\au{Daniel McDuff}{5},
\au{Ming Jin}{11},
\au{Markus Kreft}{2},
\au{Kevin O'Sullivan}{2},
\au{Robert Jakob}{1,2}
\\
\au{Azul Garza}{18,\textdagger},
\au{Paul Schmiedmayer}{3,\textdagger},
\au{Patrick Langer}{1,2,3,\textdagger}%
}

\affiliations{%
\aff{1}{Aionic Labs, Switzerland}
\aff{2}{Agentic Systems Lab, ETH Zurich, Switzerland}
\aff{3}{Stanford University, USA}
\aff{4}{Technical University of Munich, Germany}
\aff{5}{Google}
\aff{6}{National University of Singapore, Singapore}
\aff{7}{Dartmouth College, USA}
\aff{8}{New York University, USA}
\aff{9}{UNSW Sydney, Australia}
\aff{10}{KAIST, South Korea}
\aff{11}{Griffith University, Australia}
\aff{12}{Seattle Children's Hospital, USA}
\aff{13}{Columbia University, USA}
\aff{14}{Harvard University, USA}
\aff{15}{Sun Yat-sen University, China}
\aff{16}{University of Cambridge, UK}
\aff{17}{University of Virginia, USA}
\aff{18}{TimeCopilot, USA}
\aff{19}{Centre for Digital Health Interventions, St.\ Gallen, Switzerland}
\aff{20}{University of California, Los Angeles, USA}
\aff{21}{Eindhoven University of Technology, The Netherlands}%
}

\authornotes{\textsuperscript{*}Equal contribution.
  \qquad\textsuperscript{\textdagger}Shared senior authors.}

\correspondence{\email{planger@stanford.edu}}
\repository{https://github.com/OpenTSLM/TimeNet}
\huggingface{https://huggingface.co/OpenTSLM}
\projectpage{https://timenet.ai}
\coverlogos{\includegraphics[width=\linewidth]{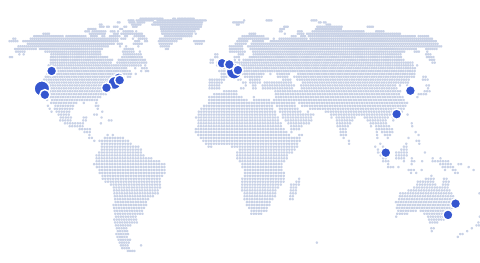}}
\begin{document}

\maketitle

\begin{abstract}
\Glspl{tfm} aim to generalize across domains, datasets, and tasks. Yet, their development remains constrained by fragmented, task-specific data formats, annotations, and processing pipelines.
We introduce \projectname*{TimeNet}, an open-source data standard and scalable infrastructure that decouples temporal data from task definitions and represents signals, metadata, annotations, and supervision in a shared, extensible data model.
\projectname{TimeNet} supports multimodal signals with regular, irregular, or ordinal time axes and expresses different task families (including classification, forecasting, temporal localization, question answering, generation, and editing) as reusable views over the same recordings.
This shared representation enables heterogeneous time-series datasets to be combined for large-scale model training across domains, modalities, and tasks.
We demonstrate \projectname{TimeNet} by transcoding datasets with 1.5M task instances spanning diverse domains, modalities, temporal scales, and forms of supervision, while retaining practical I/O performance relative to native formats.
\projectname{TimeNet} enables an existing \gls{tfm} training pipeline to support joint training on a configurable number of heterogeneous datasets through configuration changes alone. We show this capability by training \glspl{tslm} across multiple datasets and tasks, obtaining a 14\% F1 score improvement compared with models trained on individual datasets.
These results show that \projectname{TimeNet} provides the data and systems foundation needed to move beyond task- and dataset-specific TFMs toward models that can learn jointly across heterogeneous domains, modalities, temporal scales, and forms of supervision from a common data model.
\end{abstract}
\glsresetall

\input{01_introduction}

\input{02_related_work}

\input{03_methods}

\input{04_experiments}
\input{05_discussion}

\subsubsection*{Reproducibility Statement}
All source code associated with this work is publicly available. All external datasets used are open source. These resources ensure full reproducibility of our results.

\subsubsection*{Use of AI}
In this work, we used generative AI tools to implement parts of the software code used in our experimental pipeline and to assist with the formal expression of portions of our task definitions, including mathematical notation and formulation. The underlying concepts, task taxonomy, and scientific content of these definitions were developed by the authors. We have not used generative AI tools to generate synthetic datasets, develop theoretical models or conceptual frameworks, provide critical ingredients for or assist in writing mathematical proofs, propose or refine hypotheses, design or provide feedback on research methodology or experiments, assist with translation, clean or reformat datasets, support qualitative or thematic data analysis, interpret results, or draw scientific conclusions. Additionally, we used generative AI tools to edit portions of the manuscript to improve grammar, clarity, and readability; to review source code for potential errors or bugs; and to create or modify scientific figures or images.

We have reviewed all AI-assisted work. In particular, AI-assisted code was developed under strict human supervision, reviewed by at least two authors through pull requests before being accepted into the repository, and tested for correctness before being used in experiments. AI-assisted text, mathematical formulations, and visual materials were likewise reviewed and approved by the authors.

We take responsibility for the final content of this work, including text, claims, code, figures, and other artifacts produced or modified with the aid of generative AI.

% Variant B: world map with the institution names (see affiliations.tex)
\input{affiliations}

\bibliography{iclr2027_conference}
\bibliographystyle{iclr2027_conference}   % natbib author-year; plainnat also works

\appendix
\input{06_appendix}
\input{07_appendix_task_taxonomy}

\printglossary[type=\acronymtype]

\end{document}

%% file: math_commands.tex
\usepackage{amsmath,amsfonts,bm}

\def\eqref#1{equation~\ref{#1}}
\def\1{\bm{1}}

\DeclareMathAlphabet{\mathsfit}{\encodingdefault}{\sfdefault}{m}{sl}
\SetMathAlphabet{\mathsfit}{bold}{\encodingdefault}{\sfdefault}{bx}{n}

%% file: acronyms.tex
\newacronym{tsfm}{TSFM}{Time Series Foundation Model}
\newacronym{tfm}{TFM}{Temporal Foundation Model}
\newacronym{ecg}{ECG}{Electrocardiogram}
\newacronym{tslm}{TSLM}{Time Series Language Model}
\newacronym{meds}{MEDS}{Medical Event Data Standard}
\newacronym[first={SHHS}]{shhs}{SHHS}{Sleep Heart Health Study}
\newacronym{api}{API}{application programming interface}
\newacronym[first={AI}]{ai}{AI}{artificial intelligence}
\newacronym[first={COVID-19}]{covid}{COVID-19}{coronavirus disease 2019}
\newacronym{cot}{CoT}{chain-of-thought}
\newacronym{crps}{CRPS}{continuous ranked probability score}
\newacronym{csv}{CSV}{comma-separated values}
\newacronym{dtw}{DTW}{dynamic time warping}
\newacronym{edf}{EDF}{European Data Format}
\newacronym{eeg}{EEG}{electroencephalogram}
\newacronym{emg}{EMG}{electromyogram}
\newacronym{eog}{EOG}{electrooculogram}
\newacronym{fscore}{F1}{F1 score}
\newacronym{http}{HTTP}{Hypertext Transfer Protocol}
\newacronym[first={I/O}]{io}{I/O}{input/output}
\newacronym{iou}{IoU}{intersection over union}
\newacronym[first={JSON}]{json}{JSON}{JavaScript Object Notation}
\newacronym{mae}{MAE}{mean absolute error}
\newacronym{mape}{MAPE}{mean absolute percentage error}
\newacronym{mase}{MASE}{mean absolute scaled error}
\newacronym{mrr}{MRR}{mean reciprocal rank}
\newacronym{nll}{NLL}{negative log-likelihood}
\newacronym{qa}{QA}{question answering}
\newacronym[first={SDK}]{sdk}{SDK}{software development kit}
\newacronym[first={SQL}]{sql}{SQL}{Structured Query Language}
\newacronym[first={UTF}]{utf}{UTF}{Unicode Transformation Format}
\newacronym{wfdb}{WFDB}{WaveForm DataBase}
\newacronym[first={MMD}]{mmd}{MMD}{multi-domain multimodal dataset}
\newacronym[first={TSB-UAD}]{tsbuad}{TSB-UAD}{time-series benchmark for univariate anomaly detection}
\newacronym[first={UCR}]{ucr}{UCR}{University of California, Riverside}
\newacronym[first={UEA}]{uea}{UEA}{University of East Anglia}
\newacronym[first={UTSD}]{utsd}{UTSD}{Unified Time Series Dataset}
\newacronym[first={ECG-QA}]{ecgqa}{ECG-QA}{Electrocardiogram Question Answering}

%% file: 01_introduction.tex
\section{Introduction}\label{intro}

% ====================== V2, shortened ======================
Recent advances in \glspl{tfm}, including \glspl{tsfm} and \glspl{tslm}, have demonstrated that scaling data, models, and compute can enable efficient transfer and zero/few-shot generalization across heterogeneous time-series datasets, tasks, and application domains~\citep{Narayanswamy26arxiv, yao2025iclr, Ansari25arxiv,Goswami24icml, shi2024neurips}. 
However, most existing \glspl{tsfm} remain specialized to individual tasks, such as forecasting, classification, or anomaly detection, or to particular domains (e.g., weather or health).
Real-world temporal systems are often multimodal, asynchronous, event-driven, multi-scale, and subject to distribution shifts ~\citep{Aksu24arxiv,Garza26arxiv}.
%understanding them may require multiple complementary capabilities as no single task captures all aspects of their state, dynamics, and context. 
Modeling these systems may require the ability to infer the current state of a system, explain salient events \citep{Rastogi2026arxiv}, anticipate future trajectories, and reason about how those trajectories may change under different interventions. For example, monitoring a patient in an intensive care unit can involve detecting the onset of an arrhythmia from \gls{ecg}, explaining the waveform changes that support it, forecasting the patient's hemodynamic trajectory, and reasoning about how that trajectory would change under an intervention.
Next-generation \glspl{tfm} should therefore move beyond task-specific variants toward shared architectures that can model evolving systems across tasks, modalities, and temporal scales.

\begin{figure}[h!t]
  \centering
  \includegraphics[width=1.0\linewidth]{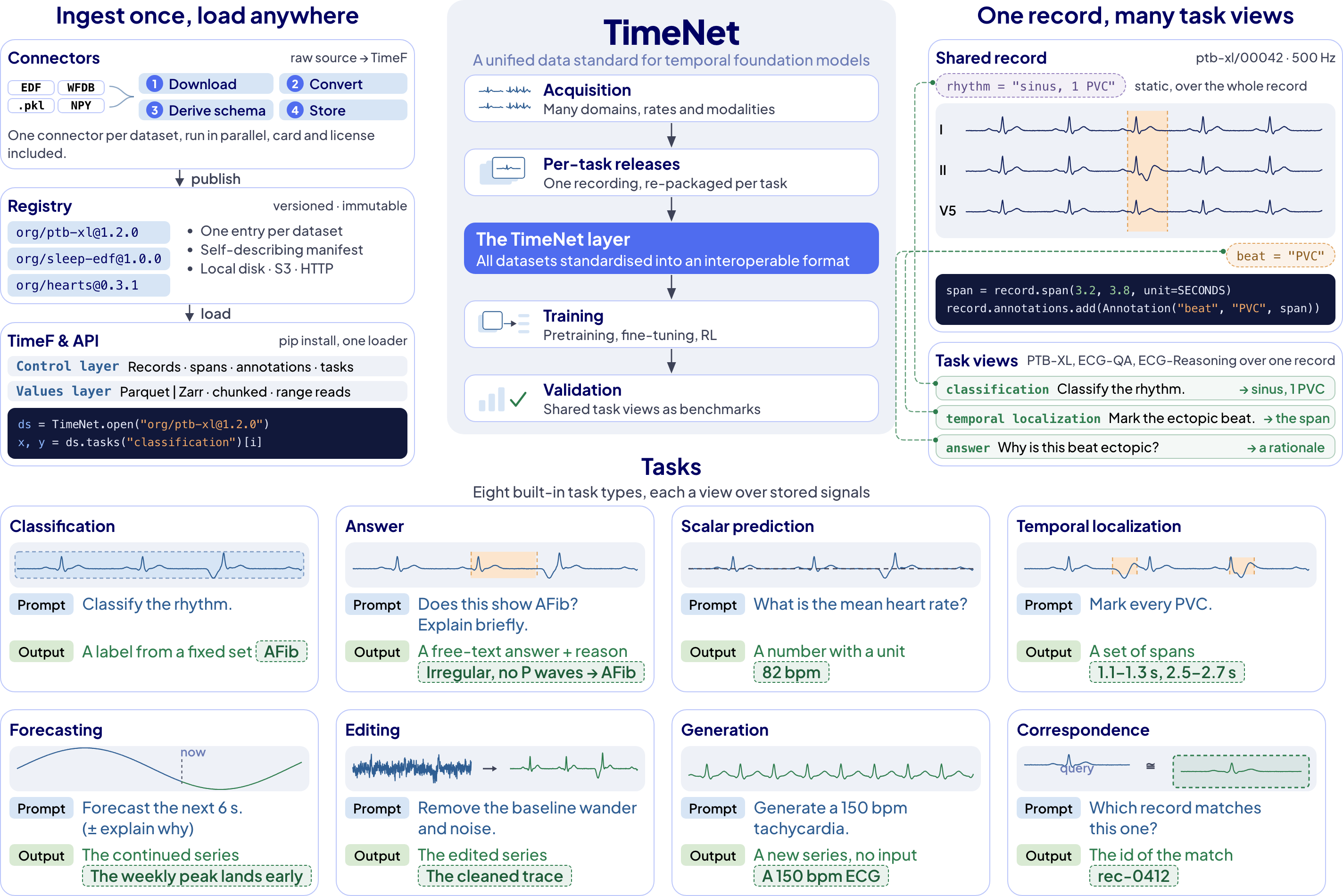}
 \caption{\textbf{An Overview of TimeNet.} We introduce a unified data standard, open software infrastructure and an extensive framework for training and evaluation. Multiple tasks can be defined over the same underlying data and across datasets, which can be combined seamlessly during training.}
  \label{fig:timenet_annotations_tasks}
\end{figure}

Building such models presents two core challenges: (1) learning from heterogeneous forms of supervision across pretraining, supervised fine-tuning, instruction tuning, and post-training with verifiable rewards; and (2) integrating datasets that vary in domain, acquisition protocol, temporal scale, sensor configuration, modality, and tasks. Existing datasets fail to support this vision, since they are typically published as self-contained, task-specific packages with separate and ad-hoc preprocessing pipelines, representations, annotations, and loaders.
When new tasks are introduced, the same underlying recordings are often repackaged into new task-specific datasets with custom formats, rather than extending a shared data model.
PTB-XL~\citep{Wagner20scidata}, for example, was originally released for \gls{ecg} classification and later repurposed for question answering in \gls{ecg}-\gls{qa}~\citep{Oh23nipsdb}, clinical reasoning in \gls{ecg}-Reasoning-Benchmark~\citep{Oh26arxiv}, and instruction tuning in PULSE~\citep{Liu24arxiv}, each effort introducing its own task formulation and representation of the same underlying \gls{ecg} recordings.
These efforts add supervision to existing recordings, but remain specialized rather than building on a shared representation.

%Language and vision faced analogous challenges.
%WordNet provided natural language research with a common lexical ontology~\citep{Miller95cacm}, while ImageNet built on WordNet’s hierarchy to organize a large-scale visual corpus, enabling the computer vision boom of the 2010s.
To address these fragmentation challenges and provide a shared foundation for the research field, we introduce \textbf{TimeNet}, a unified and extensible data standard and infrastructure for next-generation \glspl{tfm}. TimeNet makes three core contributions (Figure~\ref{fig:timenet_annotations_tasks}): \textbf{(1) A Unified Data Standard:} a common, model-independent format that separates temporal recordings from task definitions, enabling classification, forecasting, anomaly detection, reasoning, generation, and editing as tasks over a shared data representation; \textbf{(2) Open Software Infrastructure:} tools for submitting, cataloging, processing, versioning, splitting, and loading datasets through a unified interface; and \textbf{(3) An Extensible Foundation for Scalable Training and Benchmarking Across Datasets:} a plug-and-play framework that converts any supported dataset into training tensors without dataset-specific code, enabling to integrate and scale diverse datasets and tasks from a unified representation for large-scale training of multitask \glspl{tfm} and standardized evaluation. We demonstrate this capability through pooled-dataset training of \glspl{tslm}, providing a concrete use case of TimeNet’s model-independent infrastructure for \glspl{tfm}.
We open-source all implementations and experiments\footnote{Anonymized during review; code attached as supplementary material.}.

%% file: 02_related_work.tex
\section{Related Work}
%In the interest of brevity, we have banished the full literature review to the Supplement \ref{}.
%The key distinction for positioning TimeNet, however, is the level at which prior work standardizes time-series data.
Time Series resources range from task-specific benchmarks, such as \acrshort{ucr}/\acrshort{uea} for classification, Monash for forecasting, and \acrshort{tsbuad} for anomaly detection~\citep{Dau19jas,Bagnall18arxiv,Godahewa21nipsdb,Paparrizos22pvldb}, to large heterogeneous corpora including Time Series Pile, LOTSA, and Time-\acrshort{mmd}~\citep{Goswami24icml,Woo24icml,Liu24nipsdb}. Forecasting evaluation has evolved from the M-competitions~\citep{Makridakis20ijf} toward broader suites such as GIFT-Eval, \gls{tsfm}-Bench, TIME, BOOM, \texttt{fev}, and live evaluation streams such as Impermanent~\citep{Aksu24arxiv,Li25kdd,Qiao26arxiv,Cohen25arxiv,Shchur25arxiv,Garza26arxiv}. Recent benchmarks also target reasoning, retrieval, generation, and editing~\citep{Cai24arxiv,Li26arxiv,Zumarraga26iclrw,Zumarraga26arxiv,Langer25arxiv,Shuai26arxiv,Qiu25arxiv,Kim26arxiv, Kong2025acl}. Complementary efforts standardize software interfaces, dataset formats, or observational data~\citep{Loning19arxiv,Alexandrov20jmlr,Herzen22jmlr, prabowo2024neuripsbts, Wu26arxiv,Lansing22oceans,Arnrich24iclrw}, while corpora such as \acrshort{utsd} and LOTSA harmonize numerical data for pretraining~\citep{Liu24icml,Woo24icml}. Agentic frameworks such as TimeCopilot~\citep{Garza25arxiv} provide a unified \acrshort{api} across \glspl{tsfm}, but still rely on upstream datasets in compatible formats.

While these efforts focus on curating and evaluating datasets, few address the infrastructure for storing and serving time series at scale for model training, and many existing solutions target only longitudinal (physiological) waveforms. AtriumDB~\citep{goodwin2020practical} collects waveforms from diverse devices and vendors into a unified collection, stores them losslessly using a compact delta-encoding compression scheme, indexes them with metadata for quality control and cohort identification without accessing raw data, and enables rapid programmatic retrieval optimized for parallel and distributed computing. Its open-source \acrshort{sdk} provides efficient data loaders with batching strategies tailored to physiological waveforms, and the platform has enabled use cases such as real-time bedside models on high-frequency blood-pressure signals~\citep{nagaraj2024needles}. However, such infrastructure is limited to a single data type and clinical environment, lacks tasks and first-class text supervision, and cannot publish heterogeneous datasets as versioned, shareable entries. %TimeNet addresses these gaps by spanning arbitrary domains and modalities, treating annotations and tasks as reusable views over shared recordings, and publishing datasets to a public versioned registry.

%% file: 03_methods.tex
\section{Methods}
\label{sec:methods}
TimeNet addresses limited interoperability, extensibility, and scalability by providing a unified schema and infrastructure that can represent heterogeneous time-series datasets without losing information. \input{figures/timenet_pipeline}Its design is guided by four key principles:
\textbf{(1) Unified interface:} datasets are accessed through a common schema and \acrshort{api}, enabling a single processing and loading pipeline.
\textbf{(2) Extensibility:} records can be augmented with new tasks and annotations without duplicating source data; new task types can be added as needed.
\textbf{(3) Many-to-many relationships:} tasks and annotations can reference multiple records, and vice versa.
\textbf{(4) Composability:} records, tasks, and annotations can be independently combined and reused across datasets.

We translate these principles into the various components that form a pipeline, as shown in Figure~\ref{fig:timenet_pipeline}.
First, dataset-specific connectors standardize heterogeneous source datasets into the \texttt{TimeF} data model, which provides a common representation for signals, annotations, tasks, and their relationships, and stores converted datasets as reusable, versioned entries (Section~\ref{sec:methods:data_representation}).
Second, we define a common framework for temporal tasks that makes the structure and relationships between task types explicit. This framework guides TimeNet's built-in task definitions and provides an \acrshort{api} through which existing tasks can be specialized or new task types can be added without changing the underlying data representation (Section~\ref{sec:taskspec}).
Finally, the data-loading layer operates over the resulting common representation and can select, combine, and collate data from arbitrary datasets into model-specific inputs, independent of their original source format (Section~\ref{sec:methods:data_collation}).

Because dataset ingestion, task definition, and data loading all operate within the same infrastructure and based on a common data model, improvements made at any layer compound across the ecosystem rather than remaining dataset-specific. More generally, TimeNet changes dataset integration with consumers (for example, model architectures or data analysis libraries) from a pairwise problem into an interface problem. With \(N\) datasets and \(M\) downstream consumers, directly supporting every dataset--consumer combination may require up to $N\times M$ dataset-specific integrations. TimeNet instead requires $N$ dataset connectors and $M$ integrations against the shared TimeF interface, reducing the integration surface to $M+N$. New loading logic, tasks, annotations, batch collation or efficient data distribution can be reused across all compatible datasets, while newly added datasets immediately benefit from the existing infrastructure. As TimeNet grows, both data and functionality therefore accumulate around the same interface, reducing per-dataset integration effort and enabling increasingly large and heterogeneous \gls{tfm} training.

\subsection{Data Representation}
\label{sec:methods:data_representation}
We introduce \texttt{TimeF} as underlying data model for TimeNet. \texttt{TimeF} provides an explicit object hierarchy embedded in a relational representation spanning signals, annotations, and tasks and maps them to an efficient on-disk format.
Figure~\ref{fig:data_control_hierarchy} illustrates how the hierarchy maps to the control and value planes of the on-disk format.
\input{figures/data_control_hierarchy}

\textbf{Records, Sources, and Signals:}
In TimeNet, a TimeF-Dataset contains Records, each of which represents a logical observation such as a patient visit, an experimental session, or a machine run.
Rather than containing Signals directly, each Record owns one or more Sources that describe where its data originated.
Sources form a recursive hierarchy: a Source may contain child Sources as well as Signals, thereby preserving acquisition structures such as devices, or sensor assemblies without encoding provenance in Signal names.
Each Signal specifies its data type, shape, unit, and an \emph{axis} along which it is sampled.
Axes can be regular (uniform fixed frequency), irregular (non-uniformly sampled), or ordinal (ordered values without timestamps).
Signal specifications and axes are immutable and may be shared, while each Signal remains a leaf of a single Record's Source hierarchy.
The resulting hierarchy provides a flexible representation of heterogeneous recordings in which the atomic components of a dataset can be composed in arbitrary configurations.
  
\textbf{Spans:}
A Span identifies a point or half-open interval, either in microseconds on a Record timeline or by ordinal position within a Signal (see Figure~\ref{fig:data_control_hierarchy} in green). Time-valued Spans may additionally identify the Signals to which they apply and are used to localize annotations, Task inputs, and temporal targets. This common addressing scheme makes temporal regions reusable across Annotations and Tasks, independently of how the underlying Signals are sampled.

\textbf{Annotations:}
Annotations represent typed observations about Datasets, Tasks, Records, Sources, or Signals and may optionally be localized by a Span.
TimeF separates reusable annotation content from its occurrences, allowing the same content to be attached to several objects while retaining occurrence-specific provenance, confidence, and metadata.
This design allows supervision and metadata to be stored in shared data objects rather than being copied into task-specific dataset variants.

\textbf{Tasks:}
TimeNet defines eight task types: \emph{classification}, \emph{question answering}, \emph{scalar prediction}, \emph{temporal localization}, \emph{forecasting}, \emph{generation}, \emph{editing}, and \emph{correspondence}.
Tasks reference ordered tuples of input Records and may encode their targets either inline or through references to annotation occurrences, thereby avoiding duplication of the underlying data and supervision.
Consequently, the same recordings can support multiple forms of supervision and can be recombined across training and evaluation settings without repackaging the underlying Signals (see Figure~\ref{fig:timenet_pipeline}).
Further details and API examples are provided in Appendix~\ref{sec:app:implementation}.

\textbf{Object and storage models:}
TimeNet separates its user-facing object model from the concrete storage model used by TimeF.
The object model described above provides an abstract representation of a dataset through Records, Sources, Signals, Annotations, Tasks, and the relationships among them, and can be used through an \gls{api}.
TimeF maps these objects and relationships to an immutable storage representation comprising a \acrshort{json} \emph{manifest}, a \emph{control plane} database, and the \emph{values plane}; a collection of Signal value files.
Within this representation, the control plane is managed with DuckDB, a portable analytical \acrshort{sql} database engine. The database contains relational tables storing the object hierarchy, axes, annotations, Tasks, and their references.
The larger Signal arrays containing actual data form the values plane and are written as \textit{Parquet} format.
A chunk index in the control plane links each Signal to its physical values, thereby connecting the abstract object model to the underlying data without embedding those arrays in the relational database.
This mapping permits relational filtering over metadata and object types without scanning large arrays, while retaining efficient parallel and partial access to Signal values.

%In combination, these abstractions fulfil the four design requirements introduced above.
%The object hierarchy model provides a unified interface, reusable Annotations and Tasks support extensibility and composability, and the DuckDB control plane makes the relationships among these objects explicit and queryable.

\textbf{Dataset Ingestion and Registry}
\label{sec:registry}
We provide dataset-specific connectors that transform source datasets into TimeNet via a \texttt{download}$\to$\texttt{convert}$\to$\texttt{derive\_schema}$\to$\texttt{store} pipeline, enabling parallel record processing before dataset-level aggregation.
Converted datasets can be published as immutable, versioned entries in the \textbf{TimeNet registry}, identified by \texttt{org/name[@version]}, subject to dataset licensing.
Restricted, non-redistributable datasets can instead be rebuilt locally using a publishable connector.
The registry currently contains 1.51M task instances: 899k question answering, 484k classification, 131k generation, 441 temporal localization, and 292 scalar prediction.

\input{03_2_specification_language}

\subsection{Data collation}
\label{sec:methods:data_collation}

Once datasets are represented in TimeF, the same loading and collation pipeline can be applied independent of their original source format.
For a batch of Tasks $\mathcal{B}=\{\tau_i\}_{i=1}^{B}$, TimeNet resolves the Records, Signals, Annotations, temporal scopes, and targets referenced by each $\tau_i$, and materializes only the required Signal values.
A model-specific collation function
%\begin{equation}
 $   \mathcal{C}_m(\mathcal{B})
    =
    \bigl(X_m, M_m, C_m, Y_m\bigr)$
%\end{equation}
then maps the resolved task instances into the representation expected by model $m$, where $X_m$ contains the time-series inputs, $M_m$ optionally identifies valid or observed values, $C_m$ contains additional conditioning such as text or metadata, and $Y_m$ contains the training targets. Figure~\ref{fig:prompt_rendering} visualizes this process.\input{figures/prompt_rendering}Depending on the model, this step may include windowing, padding, resampling, channel selection, or conversion into one or more tensors.
Because $\mathcal{C}_m$ operates on the common TimeF objects rather than dataset-specific formats, the same batching and tensorization logic can be reused across datasets and task types, and newly added datasets can directly use the existing training infrastructure, enabling to scale up datasets without having to write extra code.

%% file: figures/timenet_pipeline.tex
\begin{figure}[!b]
  \centering
 % \includesvg[
  %  width=\linewidth,
  %  inkscapelatex=false
  %]{figures/timenet_pipeline_figma.p}
 \includegraphics[width=\linewidth]{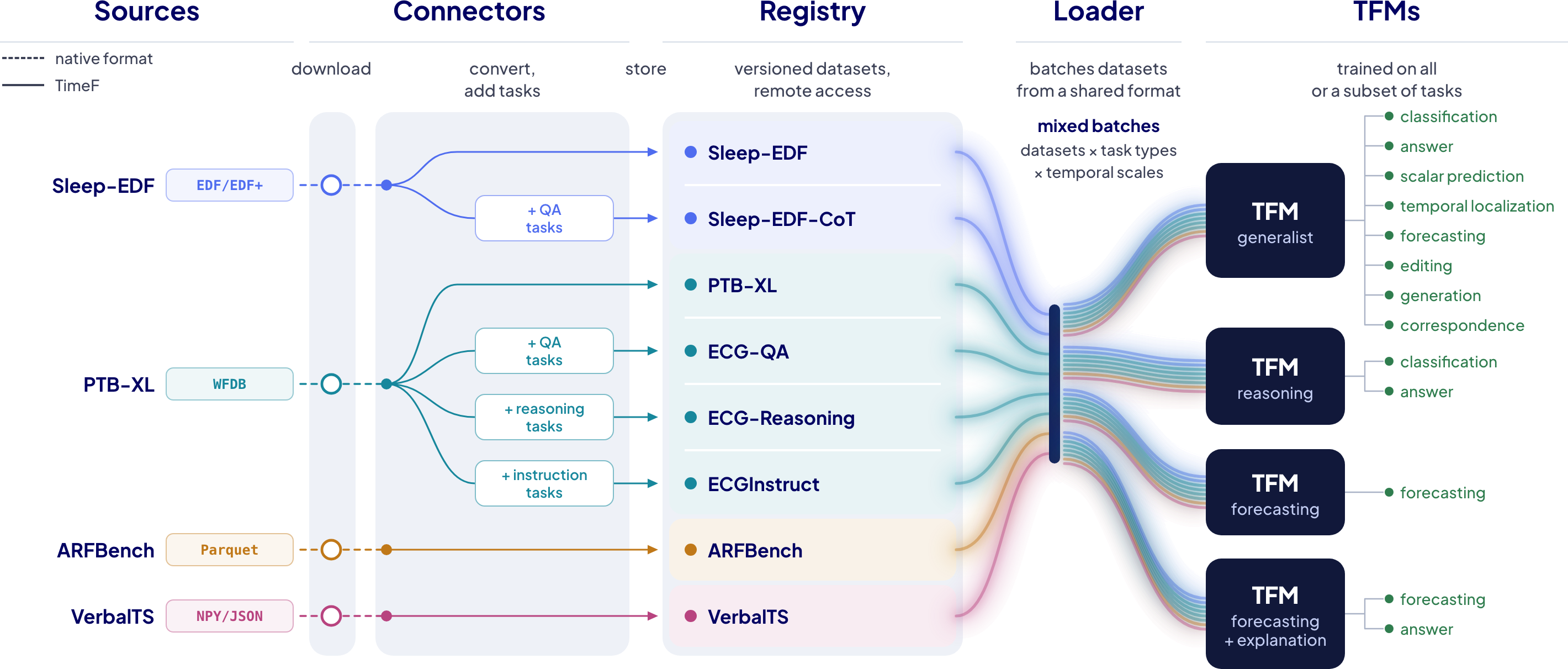}
 \caption{\textbf{TimeNet processing pipeline.}
 TimeNet allows for flexible reuse and recombination of heterogeneous data.
 Once ingested into TimeNet using a specialized connector, a dataset can be used for any model that operates over applicable Tasks.
 }
  \label{fig:timenet_pipeline}
\end{figure}

%% file: figures/data_control_hierarchy.tex
\begin{figure}[h!t]
  \centering
  % Pre-rendered PDF (fonts embedded) -- no Inkscape or installed fonts needed.
  % To render from the SVG instead (Inkscape + Plus Jakarta Sans required), use:
  % \includesvg[width=\linewidth,inkscapelatex=false]{figures/data_control_hierarchy_aligned.svg}
  \includegraphics[width=\linewidth]{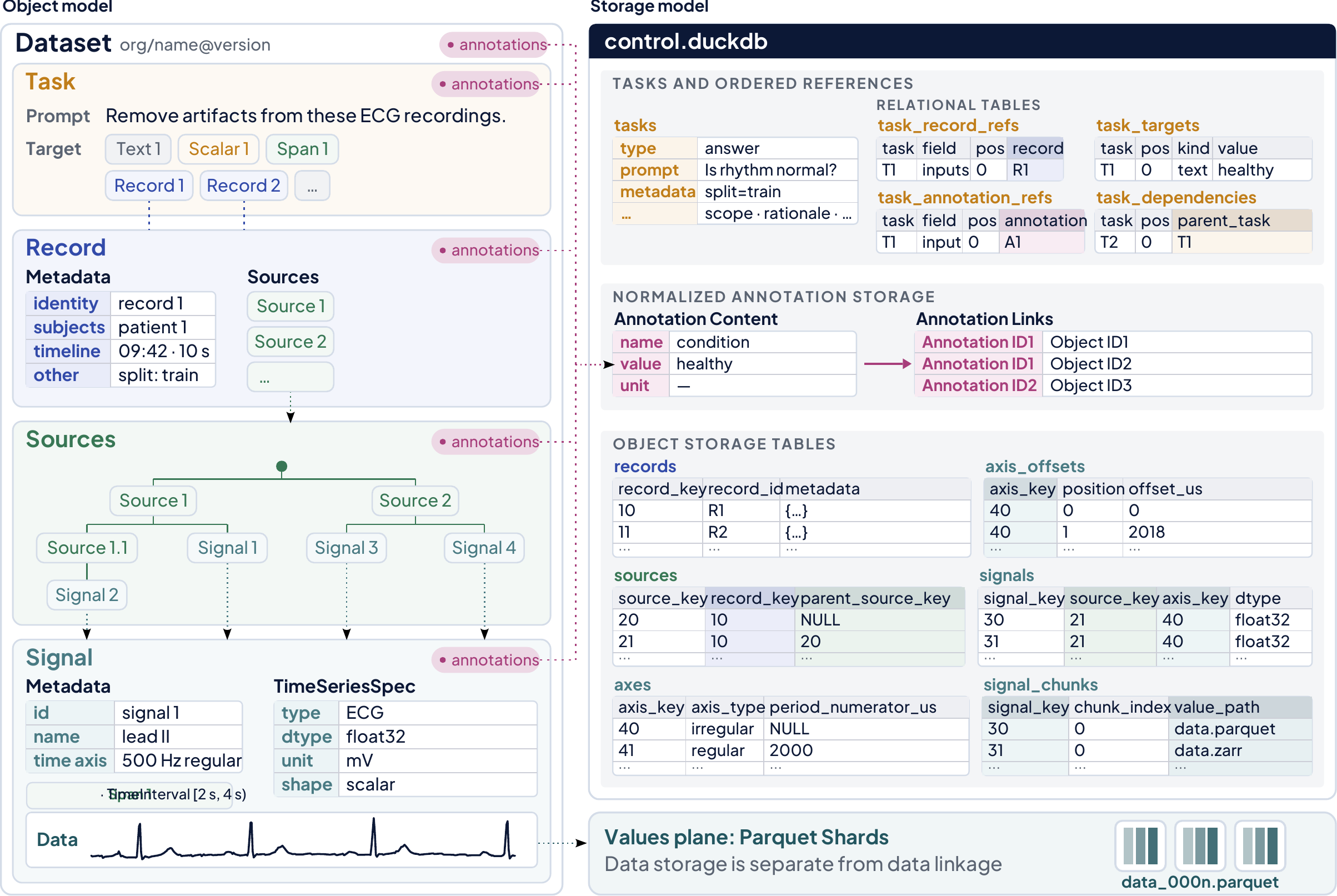}
  \caption{\textbf{TimeNet data model.}
    TimeNet represents each dataset as Records with recursive hierarchies of Sources and Signals.
    A Source describes where data originated and can contain child Sources.
    Signals describe value streams and link them to stored values and metadata.
    Tasks reference one or more Records, and each Record can participate in multiple Tasks.
    Datasets, Tasks, Records, Sources, and Signals can all have annotations.
    \textit{DuckDB} stores the hierarchy, annotations, and many-to-many relationships.
    %This relational structure supports flexible filtering and avoids data duplication through reference-based links.
    Underlying Signal values are stored separately in efficient, sharded binary files.% - we prefer \textit{Parquet} for its interoperability, and efficient partial, parallel, and remote access - but the values-plane abstraction remains accommodating to alternative storage formats.
  }
  \label{fig:data_control_hierarchy}
\end{figure}

%% file: 03_2_specification_language.tex
\subsection{Formal Specification of Temporal Tasks}
\label{sec:taskspec}

Time-series tasks are commonly organized by names such as \textit{classification},
\textit{forecasting}, \textit{anomaly detection}, or \textit{question answering}. However, these names
often mix several distinct aspects of a prediction task: \emph{what} the
model must predict, \emph{where} that target lies relative to the observed
signal, and \emph{how} the task is presented to the model. For example, an
anomaly may be represented either as a record-level label or as a temporal
interval, while the same prediction may be requested through a structured
output or a natural-language question. As new formulations emerge, the same
underlying recordings are therefore frequently repackaged into separate
task-specific datasets (Section~\ref{intro}). In TimeNet, we instead derive tasks
compositionally from their inputs, targets, and temporal relationships.

\paragraph{Task representation.}
A record $r$ contains Signals $x_k:\mathcal{T}_k\to\mathbb{R}^{c_k}$,
$k\in K_r$, together with annotations $\mathcal{A}_r$. A task selects an
input view $W_{\mathrm{in}}$ over one or more records, optional conditioning
$C$, and a target of type $O$ associated with support $W_{\mathrm{out}}$.
The target may be stored directly or derived from existing signals and
annotations. The complete TimeNet task specification, including interface,
rationale, access protocol, and scoring, is given in
Appendix~\ref{app:taxonomy}.

To characterize the semantic operation performed by a task, we retain three
properties of this specification. Let
$\kappa(O)\in\{\textsc{Value},\textsc{Span},\textsc{Signal},\textsc{Index}\}$
denote the kind of target. Let
\[
\Delta(\tau)\in
\{\textsc{Obs},\textsc{Gap},\textsc{Fut},\textsc{Past},
  \textsc{Chan},\textsc{Null}\}
\]
describe where the target lies relative to the observed input: on observed
support, within a missing region, beyond the observation in the future or
past, in an unobserved channel, or with no observed input, respectively. Finally,
$\alpha(\tau)\in\{1,\geq2\}$ denotes the number of temporal objects over
which the target is jointly defined. We call

\begin{equation}
    \phi(\tau)
    =
    \bigl(\kappa(O),\Delta(\tau),\alpha(\tau)\bigr)
    \label{eq:task-signature}
\end{equation}
the \emph{task signature}.

Different temporal tasks can be constructed by combining the three components
of the signature: output kind $\kappa$, target position $\Delta$, and arity
$\alpha$. For example, a single-object \textsc{Value} target within the
recorded support gives \texttt{attribute inference}, whereas a \textsc{Span}
target gives \texttt{temporal localization}. Moving the target beyond the
observed context gives \texttt{future-directed inference}, while defining it
jointly over multiple temporal objects gives \texttt{relational inference}.
Signal-valued targets correspond to \texttt{temporal construction} or
\texttt{temporal transformation} depending on whether an observed temporal
input is available. These combinations give the six constructions in
Table~\ref{tab:taxonomy}, grouped into five semantic task families. The complete signature
grid and formal definitions are given in
Appendix~\ref{app:grid}--\ref{app:families}.
\begin{table}[h!t]
\centering\scriptsize
\caption{Compositional taxonomy of temporal tasks. Each family is a region of the signature
space $\phi=(\kappa,\Delta,\alpha)$ (Definitions~\ref{def:task}--\ref{def:support}); the six regions partition
all well-posed signatures (Appendix~\ref{app:grid}). $\ast$: any value; $\Delta_{\mathrm{in}}=\{\textsc{Obs},\textsc{Gap},\textsc{Chan}\}$,
i.e., the target lies within the recorded extent, whether observed, masked, or in an unmeasured channel.
TimeNet realizations in \texttt{monospace}; $+\textsc{Shift}$ etc.\ denote constructors (Definition~\ref{def:constructors}).
For future-directed inference, ``other types'' are \texttt{classification}, \texttt{scalar prediction}, and
\texttt{temporal localization}.}
\label{tab:taxonomy}
\setlength{\tabcolsep}{4pt}
\renewcommand{\arraystretch}{1.25}
\begin{tabularx}{\linewidth}{@{}>{\raggedright\arraybackslash}p{2.35cm} l
  >{\raggedright\arraybackslash}X >{\raggedright\arraybackslash}X >{\raggedright\arraybackslash}p{2.2cm}@{}}
\toprule
Task family \& construction & Signature $\phi$ & Defining relation & Derived tasks & TimeNet realization \\
\midrule
\textbf{Attribute inference}\newline $f(X,C)\to y$ & $(\textsc{Value},\Delta_{\mathrm{in}},1)$
  & Infer a property of the recorded object or a span of it.
  & Classification, regression, state and severity estimation, captioning
  & \texttt{classification}, \texttt{scalar prediction}, \texttt{answer} \\ 
\textbf{Temporal localization}\newline $f(X,q)\to\{s_i\}$, $s_i\subseteq\mathcal{T}$ & $(\textsc{Span},\Delta_{\mathrm{in}},1)$
  & Identify where or when a property or event occurs in the recorded extent.
  & Event and anomaly localization, change-point detection, segmentation
  & \texttt{temporal localization} \\ 
\textbf{Future-directed inference}\newline $f(X_{\le t},C)\to Y_{>t}$ & $(\ast,\{\textsc{Fut},\textsc{Past}\},1)$
  & Infer a target whose support lies beyond the observed context.
  & Forecasting, event and future-state prediction, time-to-event, onset prediction
  & \texttt{forecasting}; other types $+\textsc{Shift}$ \\
\textbf{Relational inference}\newline $f(X_i,X_j,C)\to r$ & $(\ast,\ast,{\geq}2)$
  & Infer a relation between two or more temporal objects or spans.
  & Ordering, comparison, alignment, similarity, retrieval
  & \texttt{correspondence}; any type $+\textsc{Pair}$ \\ 
\textbf{Temporal construction}\newline $f(C)\to X'$ & $(\textsc{Signal},\textsc{Null},1)$
  & Construct a new temporal object from conditioning alone.
  & Conditional and unconditional generation
  & \texttt{generation} \\ \addlinespace[3pt]
\textbf{Temporal transformation}\newline $f(X,C)\to X'$ & $(\textsc{Signal},\Delta_{\mathrm{in}},1)$
  & Produce a temporal object by transforming or completing a recorded one.
  & Editing, denoising, reconstruction, imputation, virtual sensing
  & \texttt{editing}; \texttt{editing} $+\textsc{Mask}$ \\
\bottomrule
\end{tabularx}
\end{table}
This factorization resolves ambiguity in task names. For instance, a record-level anomaly
decision, $X\to\{0,1\}$, has the signature
$(\textsc{Value},\textsc{Obs},1)$ and is therefore attribute inference, whereas
identifying anomalous intervals, $X\to\{[t_s,t_e]\}$, has the signature
$(\textsc{Span},\textsc{Obs},1)$ and is temporal localization. Likewise,
regression and forecasting may both produce numerical targets, but only the
latter places the target beyond the observed context.

\paragraph{Task specialization.}
A task signature specifies the broad semantic family, while more specific
task types can be created by imposing additional constraints. 
For example, \gls{ecg} rhythm classification specializes classification by restricting the input
modality and label space. We denote the set of such task-specific restrictions
by $P(\tau)$, which may constrain the admissible inputs, conditioning
information, outputs, or temporal support. We write

\begin{equation}
    \tau_a \preceq \tau_b
    \quad\Longleftrightarrow\quad
    \phi(\tau_a)=\phi(\tau_b)
    \ \land\
    P(\tau_b)\subseteq P(\tau_a),
    \label{eq:task-specialization}
\end{equation}

so that \gls{ecg} rhythm classification $\preceq$ classification and anomaly
localization $\preceq$ temporal localization. TimeNet records such
relationships between tasks, allowing existing task types to be specialized
without changing the underlying data schema. An
overview of how task families map to built-in TimeNet task types and their
specializations is shown in Figure~\ref{fig:task-tree}.  The same
task may be exposed through a structured interface, natural-language
question, or instruction; may optionally require a rationale; and may be
evaluated offline or in streaming mode. We formalize these axes, together with the task
constructors and their validity conditions, in
Appendix~\ref{app:orthogonal}--\ref{app:preconditions}.

\begin{figure}[t]
    \centering
    \includegraphics[width=\linewidth]{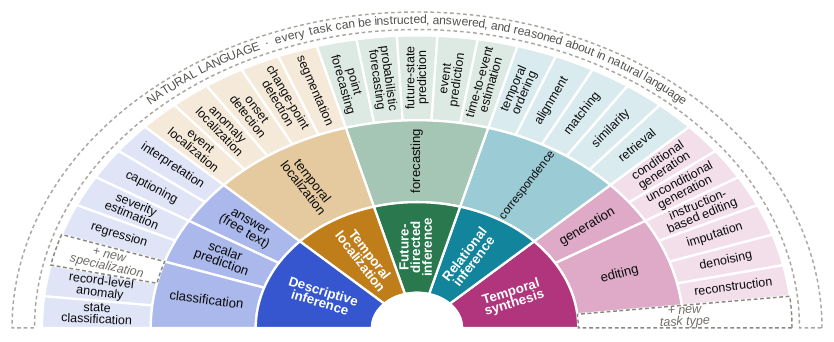}  % tools/make_sunburst.py --half --squash 0.78
    \caption{%
    \textbf{TimeNet task taxonomy.} TimeNet has five semantic task
    families, each containing built-in task types that can be further
    specialized or expanded. Tasks may be expressed in natural language.}
    \label{fig:task-tree}
\end{figure}

%% file: figures/prompt_rendering.tex
\begin{figure}[h!t]
  \centering
  % Pre-rendered PDF (fonts embedded) -- no Inkscape or installed fonts needed.
  % To render from the SVG instead (Inkscape + Plus Jakarta Sans required), use:
  % \includesvg[width=\linewidth,inkscapelatex=false]{figures/prompt_rendering_edges_large_aligned.svg}
  \includegraphics[width=\linewidth]{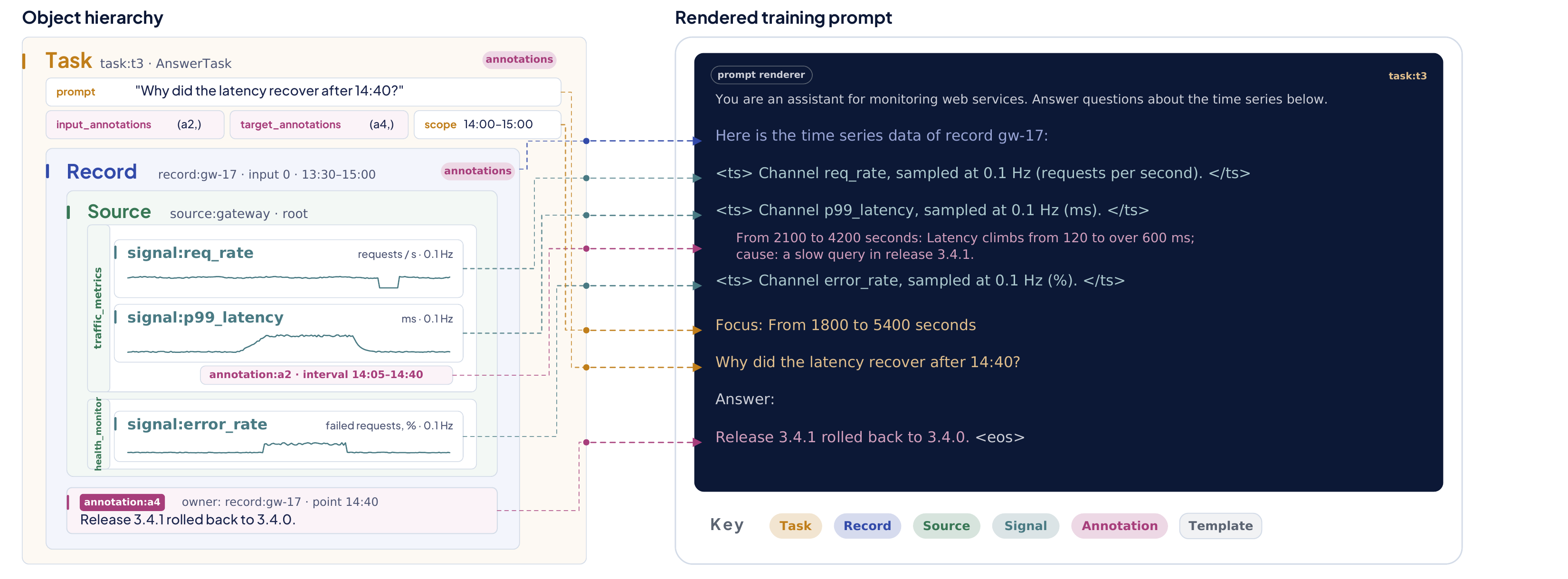}
  \caption{\textbf{Data collation in TimeNet.}
    The structure of TimeNet allows for a common mapping from Tasks to prompts that can be used during training.
  }
  \label{fig:prompt_rendering}
\end{figure}

%% file: 04_experiments.tex
\section{Experiments}
\label{sec:experiments}
To evaluate TimeNet, we investigate whether training jointly on multiple datasets improves performance overall~(Section~\ref{sec:pooled_training}). We also investigate the input/output speed of accessing heterogeneous datasets through a common representation~(Section~\ref{sec:io_performance}). %We first benchmark the I/O throughput of TimeNet against native dataset formats (Section~\ref{sec:io_performance}). We then train OpenTSLM, an open-source time-series language model, jointly across six datasets and evaluate its performance against single-dataset baselines (Section~\ref{sec:pooled_training}).
\subsection{Multi-dataset Training}
\label{sec:pooled_training}
Any TimeNet dataset can be converted into training tensors by the same component, enabling pooled training across datasets. We evaluate whether greater data diversity improves performance using Sleep-\acrshort{edf} Expanded \acrshort{cot}~\citep{Kemp00tbme,Langer25arxiv}, \acrshort{ecgqa} \acrshort{cot}~\citep{Oh23nipsdb,Langer25arxiv}, HEARTS~\citep{Li26arxiv}, and ARFBench~\citep{Xie26arxiv}, which all support question answering. We conduct this comparison for SLIP~\cite{Chen26arxiv} and OpenTSLM~\cite{Langer25arxiv}, training each model family either separately on each dataset (\emph{SLIP Single}, \emph{OpenTSLM Single}) or jointly on all four (\emph{SLIP Pooled}, \emph{OpenTSLM Pooled}). We use the Llama 3.2-1B Flamingo variant for OpenTSLM and the 400M-parameter model for SLIP. Within each family, single-dataset and pooled models use the same architecture and training configuration. All models are evaluated on the same dataset-specific test sets using \gls{fscore} and accuracy.
\input{tab_opentslm_performance}
For OpenTSLM, pooled training increases \gls{fscore} on all four datasets: from 37.49 to 41.80 on Sleep-\acrshort{edf}, 31.19 to 34.21 on \acrshort{ecgqa}, 15.03 to 19.70 on HEARTS, and 32.94 to 37.41 on ARFBench, raising the average \gls{fscore} from 29.16 to 33.28. Accuracy also increases on Sleep-\acrshort{edf} (77.34 to 80.47), \acrshort{ecgqa} (67.19 to 73.44), and HEARTS (35.71 to 38.72), while decreasing slightly on ARFBench (33.33 to 32.82), resulting in an average accuracy increase from 53.39 to 56.36. For SLIP, performance rises on Sleep-\acrshort{edf} and HEARTS, and accuracy increases on ARFBench, but drops slightly on \acrshort{ecgqa}, from 33.93 to 30.00 \gls{fscore} and from 40.01 to 31.32 accuracy. 

\subsection{I/O Performance Across Heterogeneous Datasets}
\label{sec:io_performance}
We benchmark TimeNet on six heterogeneous datasets: Sleep-\acrshort{edf} Expanded \acrshort{cot}~\citep{Kemp00tbme, Langer25arxiv},
\acrshort{ecgqa} \acrshort{cot}~\citep{Oh23nipsdb,Langer25arxiv}, 
PTB-XL~\citep{Wagner20scidata}, HEARTS~\citep{Li26arxiv},
ARFBench~\citep{Xie26arxiv}, VerbalTS~\citep{Gu25icml}, and
SLIP~\citep{Chen26arxiv}. 
We compare the original data format with the TimeNet version across pandas and PyTorch consumers (see Appendix~\ref{sec:appendix:datasets}). Where available, we use original loaders; otherwise, we implement equivalent loaders and verify parity before benchmarking. We provide results in Table~\ref{tab:results:io_performance}. TimeNet is not strictly faster than native formats, which often rely on highly-optimized, data-specific, and memory-mapped encodings. However, TimeNet performance remains in comparable ranges, while providing greater flexibility and reusability.
% TimeF reduces full-read time by approximately $6.2$--$18.8\times$ on HEARTS and ARFBench, while the original representations are $1.5$--$10.8\times$ faster on VerbalTS, SLIP, and Sleep-\acrshort{edf}. TimeF reaches the first item sooner for VerbalTS and SLIP, but later for HEARTS, ARFBench, and Sleep-\acrshort{edf}.
% The same pattern appears in transferred data and HTTP request counts. 
% TimeF reduces both for HEARTS and ARFBench, while increasing them for VerbalTS, SLIP, and Sleep-\acrshort{edf}. 
% These differences depend on how each original dataset is stored and how its canonical items map to the TimeF layout. 

\input{tab_read_performance}

%% file: tab_opentslm_performance.tex
\begin{table*}[h!t]
\centering
\footnotesize
\caption{
OpenTSLM and SLIP performance when trained on each dataset individually
versus jointly.
}
\label{tab:data_pooling}

\resizebox{\textwidth}{!}{
\begin{tabular}{lcccccccccc}
\toprule
&
\multicolumn{2}{c}{\textbf{Sleep-EDF}} &
\multicolumn{2}{c}{\textbf{ECG-QA}} &
\multicolumn{2}{c}{\textbf{HEARTS}} &
\multicolumn{2}{c}{\textbf{ARFBench}} &
\multicolumn{2}{c}{\textbf{Average}} \\
\cmidrule(lr){2-3}
\cmidrule(lr){4-5}
\cmidrule(lr){6-7}
\cmidrule(lr){8-9}
\cmidrule(lr){10-11}

\textbf{Training} &
\textbf{F1} & \textbf{Acc.} &
\textbf{F1} & \textbf{Acc.} &
\textbf{F1} & \textbf{Acc.} &
\textbf{F1} & \textbf{Acc.} &
\textbf{F1} & \textbf{Acc.} \\
\midrule

OpenTSLM Single
& 37.49 & 77.34
& 31.19 & 67.19
& 15.03 & 35.71
& 32.94 & \textbf{33.33}
& 29.16 & 53.39 \\

OpenTSLM Pooled
& \textbf{41.80} & \textbf{80.47}
& \textbf{34.21} & \textbf{73.44}
& \textbf{19.70} & \textbf{38.72}
& \textbf{37.41} & 32.82
& \textbf{33.28} & \textbf{56.36} \\

\midrule

SLIP Single
& 31.78 & 87.16
& \textbf{33.93} & \textbf{40.01}
& 9.98 & 31.07
& \textbf{20.51} & 26.92
& 24.05 & 46.29 \\

SLIP Pooled
& \textbf{42.27} & \textbf{87.87}
& 30.00 & 31.32
& \textbf{24.78} & \textbf{45.61}
& 20.38 & \textbf{29.75}
& \textbf{29.36} & \textbf{48.64} \\

\bottomrule
\end{tabular}
}
\end{table*}

%% file: tab_read_performance.tex
% Remote I/O performance. PR #339 campaign, 2026-09-22.
% Harness eee7108f, clean tree, single revision across all 24 cells.
% Machine: AWS m6idn.2xlarge, 8 vCPU, 30 GiB RAM, local NVMe (ext4). One repetition per cell.
% TimeF rows: PR #339 artifacts read over presigned S3 GETs, hybrid transport
%   (whole-object GET for control.duckdb, Range reads for the values plane).
% Original rows: same transport as the published campaign (whole-object GETs, default readahead).
\begin{table*}[h!t]
\caption{Results of I/O performance assessment across 6 datasets.}
\label{tab:results:io_performance}
\centering
\footnotesize
\resizebox{\textwidth}{!}{%
\begin{tabular}{lllrrrrrrrr}
\toprule
\textbf{Dataset} &
\textbf{Representation} &
\textbf{Consumer} &
\shortstack{\textbf{Canonical item}\\\textbf{count [items]}} &
\shortstack{\textbf{Time to}\\\textbf{first item [s]}} &
\shortstack{\textbf{Full-read}\\\textbf{time [s]}} &
\shortstack{\textbf{Sequential}\\\textbf{throughput [items/s]}} &
\shortstack{\textbf{Decoded payload}\\\textbf{throughput [MB/s]}} &
\shortstack{\textbf{Transferred HTTP}\\\textbf{payload [GB]}} &
\shortstack{\textbf{HTTP request}\\\textbf{count [requests]}} &
\shortstack{\textbf{Peak resident}\\\textbf{set size [MiB]}} \\
\midrule

\multirow{4}{*}{HEARTS}
& \multirow{2}{*}{Original}
& pandas & 1,005 & \textbf{0.275} & 93.50 & 10.7 & 9.93 & 1.264 & 1,008 & \textbf{397.3} \\
& & PyTorch & 1,005 & \textbf{0.247} & 95.30 & 10.5 & 9.74 & 1.264 & 1,008 & \textbf{395.5} \\
& \multirow{2}{*}{\textit{TimeF}}
& \textit{pandas} & \textit{1,005} & \textit{0.574} & \textbf{\textit{18.43}} & \textbf{\textit{54.5}} & \textbf{\textit{50.38}} & \textbf{\textit{0.331}} & \textbf{\textit{220}} & \textit{879.1} \\
& & \textit{PyTorch} & \textit{1,005} & \textit{0.488} & \textbf{\textit{17.35}} & \textbf{\textit{57.9}} & \textbf{\textit{53.50}} & \textbf{\textit{0.331}} & \textbf{\textit{220}} & \textit{888.2} \\
\midrule

\multirow{4}{*}{ARFBench}
& \multirow{2}{*}{Original}
& pandas & 750 & \textbf{0.479} & 64.70 & 11.6 & 4.42 & 0.227 & 1,010 & 860.5 \\
& & PyTorch & 750 & \textbf{0.366} & 59.51 & 12.6 & 4.70 & 0.227 & 1,010 & 866.7 \\
& \multirow{2}{*}{\textit{TimeF}}
& \textit{pandas} & \textit{750} & \textit{0.833} & \textbf{\textit{11.40}} & \textbf{\textit{65.8}} & \textbf{\textit{25.06}} & \textbf{\textit{0.062}} & \textbf{\textit{131}} & \textbf{\textit{808.6}} \\
& & \textit{PyTorch} & \textit{750} & \textit{0.749} & \textbf{\textit{10.60}} & \textbf{\textit{70.7}} & \textbf{\textit{26.38}} & \textbf{\textit{0.062}} & \textbf{\textit{131}} & \textbf{\textit{827.8}} \\
\midrule

\multirow{4}{*}{VerbalTS}
& \multirow{2}{*}{Original}
& pandas & 104,628 & 3.407 & \textbf{53.74} & \textbf{1,946.8} & \textbf{11.05} & 0.675 & 146 & \textbf{958.5} \\
& & PyTorch & 104,628 & 2.934 & \textbf{15.15} & \textbf{6,907.4} & \textbf{37.01} & 0.674 & \textbf{140} & \textbf{956.2} \\
& \multirow{2}{*}{\textit{TimeF}}
& \textit{pandas} & \textit{104,628} & \textbf{\textit{1.087}} & \textit{70.68} & \textit{1,480.3} & \textit{8.40} & \textbf{\textit{0.431}} & \textbf{\textit{144}} & \textit{1,036.8} \\
& & \textit{PyTorch} & \textit{104,628} & \textbf{\textit{1.129}} & \textit{37.04} & \textit{2,824.8} & \textit{15.13} & \textbf{\textit{0.431}} & \textit{144} & \textit{1,011.0} \\
\midrule

\multirow{4}{*}{SLIP}
& \multirow{2}{*}{Original}
& pandas & 618,508 & \textbf{2.841} & \textbf{311.54} & \textbf{1,985.3} & \textbf{21.94} & 4.991 & \textbf{217} & 2,226.1 \\
& & PyTorch & 618,508 & \textbf{2.415} & \textbf{116.47} & \textbf{5,310.6} & \textbf{57.86} & 4.991 & \textbf{217} & 2,457.8 \\
& \multirow{2}{*}{\textit{TimeF}}
& \textit{pandas} & \textit{618,508} & \textit{9.764} & \textit{469.03} & \textit{1,318.7} & \textit{14.57} & \textbf{\textit{3.984}} & \textit{1,656} & \textbf{\textit{2,036.1}} \\
& & \textit{PyTorch} & \textit{618,508} & \textit{9.830} & \textit{282.78} & \textit{2,187.3} & \textit{23.83} & \textbf{\textit{3.984}} & \textit{1,656} & \textbf{\textit{2,033.9}} \\
\midrule

\multirow{4}{*}{ECG-QA}
& \multirow{2}{*}{Original}
& pandas & 231,543 & 28.787 & 2,735.10 & 84.7 & 20.53 & 3.267 & 32,305 & 2,139.6 \\
& & PyTorch & 231,543 & 32.103 & 2,351.40 & 98.5 & 23.80 & 3.267 & 32,305 & 2,139.0 \\
& \multirow{2}{*}{\textit{TimeF}}
& \textit{pandas} & \textit{231,543} & \textbf{\textit{1.740}} & \textbf{\textit{940.21}} & \textbf{\textit{246.3}} & \textbf{\textit{59.72}} & \textbf{\textit{2.048}} & \textbf{\textit{2,070}} & \textbf{\textit{1,150.5}} \\
& & \textit{PyTorch} & \textit{231,543} & \textbf{\textit{1.837}} & \textbf{\textit{644.28}} & \textbf{\textit{359.4}} & \textbf{\textit{86.88}} & \textbf{\textit{2.048}} & \textbf{\textit{2,070}} & \textbf{\textit{1,168.7}} \\
\midrule

\multirow{4}{*}{Sleep-EDF}
& \multirow{2}{*}{Original}
& pandas & 458,161 & \textbf{0.718} & \textbf{381.05} & \textbf{1,202.4} & \textbf{45.91} & 8.714 & \textbf{197} & \textbf{786.8} \\
& & PyTorch & 458,161 & \textbf{0.751} & \textbf{170.83} & \textbf{2,682.0} & \textbf{100.91} & 8.714 & \textbf{197} & \textbf{760.0} \\
& \multirow{2}{*}{\textit{TimeF}}
& \textit{pandas} & \textit{458,161} & \textit{1.767} & \textit{633.64} & \textit{723.1} & \textit{27.61} & \textbf{\textit{5.616}} & \textit{4,204} & \textit{1,081.8} \\
& & \textit{PyTorch} & \textit{458,161} & \textit{1.778} & \textit{431.64} & \textit{1,061.4} & \textit{39.94} & \textbf{\textit{5.616}} & \textit{4,204} & \textit{1,089.4} \\
\midrule

\bottomrule
\end{tabular}%
}
\vspace{-1em}
\end{table*}

%% file: 05_discussion.tex
\section{Discussion}
\paragraph{Principal findings.}
TimeNet provides a common representation and infrastructure for heterogeneous temporal datasets that differ in domain, modality, sampling structure, and task formulation. Across six datasets comprising more than 1.5 million task instances, the same interface supports data access and model-specific collation without dataset-specific loading pipelines, while maintaining practical I/O performance relative to native formats. This shared representation also enables datasets to be combined directly for training: for OpenTSLM, pooled training increases F1 on all four datasets. SLIP similarly improves on Sleep-EDF and HEARTS, but shows a substantial decrease on \acrshort{ecgqa}. The slight performance drop on \acrshort{ecgqa} might be explained by SLIPs smaller model size (400M parameters) compared to OpenTSLM 1B parameters, giving it less capacity to learn across tasks.
\textbf{Practical implications.}
We believe the research field must move beyond fragmented, one-off dataset releases toward publishing datasets into shared registries, with common means to read, write, annotate, expand, and version data, as provided by TimeNet. 
Recordings should be stored once, while new annotations and tasks can be added over time without duplicating or restructuring the original data. Toward this end, we have begun by combining heterogeneous data into a public registry. \\
%\textbf{Comparison with prior work.}
%TimeNet complements existing efforts such as GluonTS~\citep{Alexandrov20jmlr}, LOTSA~\citep{Woo24icml}, GIFT-Eval~\citep{Aksu24arxiv}, and MEDS~\citep{Arnrich24iclrw}, which often standardize model interfaces and dataset access for tasks such as forecasting and classification, but generally treat each dataset as a self-contained resource and leave annotations and task definitions outside the data representation. TimeNet instead jointly represents recordings, annotations, and tasks in a shared, versioned format, enabling heterogeneous datasets and supervision types to be accessed through the same interface without repackaging the underlying data.
\textbf{Limitations and future work.}
While TimeNet's data representation and interfaces are generally model- and task- independent, our training experiments evaluate only selected models and tasks rather than the full range of \gls{tfm} families. Our current evaluation focuses on data access across six datasets and training across four datasets, and, hence, only evaluates a subset of dataset and task families supported by TimeNet. Additionally, we only assesses training on a limited number of task types: Question answering and time series-based reasoning, but leave training of a model that spans all tasks to future work. %Our goal ultimately, is to bring over 100 datasets into the public registry and demonstrate improved model capabilities when training on this pool of data. 

%% file: affiliations.tex
% Variant B: full-width affiliations map with the institution names.
% Included from main_variantB.tex (after the Use-of-AI statement, before the
% references). Regenerate the map with: python tools/affiliations_map_labeled.py
\subsubsection*{Contributing Institutions}
\newcommand{\affl}[2]{\par\hangindent=1.5em\hangafter=1\noindent\makebox[1.5em][l]{\textcolor{tslmAccentDeep}{\sffamily\fontseries{sb}\selectfont#1}}#2\par}
\begin{figure}[H]
  \centering
  \includegraphics[width=\linewidth]{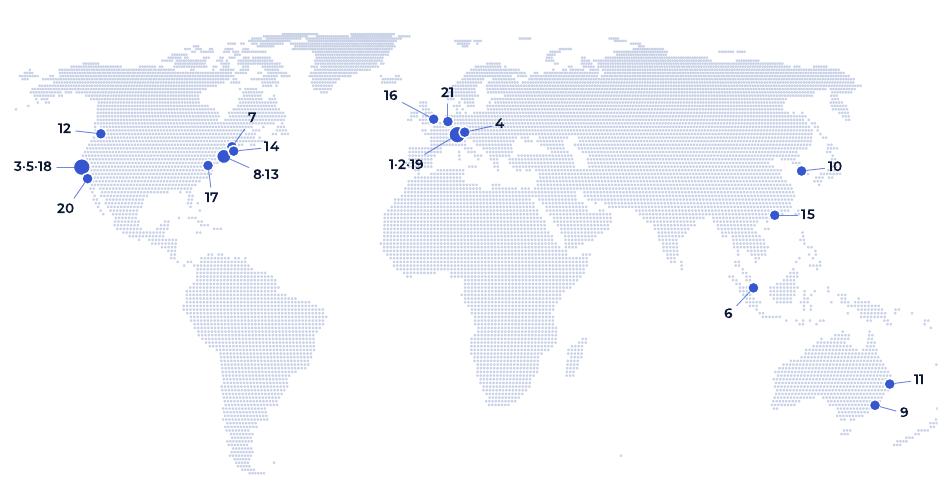}
  \vspace{6pt}
  {\fontsize{7.4}{10}\selectfont\color{tslmInk}\raggedright
  \begin{minipage}[t]{0.34\linewidth}
    \affl{1}{Aionic Labs, Switzerland}
    \affl{2}{Agentic Systems Lab, ETH Zurich, Switzerland}
    \affl{3}{Stanford University, USA}
    \affl{4}{Technical University of Munich, Germany}
    \affl{5}{Google}
    \affl{6}{National University of Singapore, Singapore}
    \affl{7}{Dartmouth College, USA}
  \end{minipage}\hfill
  \begin{minipage}[t]{0.28\linewidth}
    \affl{8}{New York University, USA}
    \affl{9}{UNSW Sydney, Australia}
    \affl{10}{KAIST, South Korea}
    \affl{11}{Griffith University, Australia}
    \affl{12}{Seattle Children's Hospital, USA}
    \affl{13}{Columbia University, USA}
    \affl{14}{Harvard University, USA}
  \end{minipage}\hfill
  \begin{minipage}[t]{0.34\linewidth}
    \affl{15}{Sun Yat-sen University, China}
    \affl{16}{University of Cambridge, UK}
    \affl{17}{University of Virginia, USA}
    \affl{18}{TimeCopilot, USA}
    \affl{19}{Centre for Digital Health Interventions, St.\ Gallen, Switzerland}
    \affl{20}{University of California, Los Angeles, USA}
    \affl{21}{Eindhoven University of Technology, The Netherlands}
  \end{minipage}\par}
  \caption*{\textbf{Contributing institutions.} Numbers follow the affiliation list on the first page.}
\end{figure}

%% file: 06_appendix.tex
\newpage
\section{TimeNet \acrshort{api} and Implementation Details}

\subsection{\acrshort{api} Example}\label{sec:app:implementation}
The TimeNet \acrshort{api} exposes the logical object hierarchy directly and hides the physical layout of the underlying TimeF version.
A consumer addresses any dataset by an \texttt{org/name[@version]} reference, while the client resolves its registry, reconstructs its typed objects from the relational control plane, and leaves Signal values lazy until they are read.
The same code therefore searches, loads, and traverses datasets independently of original file formats and source-specific dependencies.

The TimeNet \acrshort{api} is declarative and object-oriented.
Dataset authors are encouraged to use the object hierarchy directly, rather than providing a separate serialization interface.
As discussed in the main paper extensively, a Signal combines values with a typed specification and axis, Sources organize Signals recursively, and Records collect the resulting acquisition hierarchy.
Calling \texttt{annotate()} attaches reusable annotation content and returns the particular occurrence that a Task can reference as context or supervision.
Concrete Task classes then reference ordinary Record and Annotation objects, while \texttt{add\_record()} and \texttt{add\_task()} validate ownership, identity, temporal bounds, and cross-object references before modifying the Dataset.
Finally, a single \texttt{write()} call derives the schema and commits the manifest, DuckDB control plane, and Parquet or Zarr values plane as one immutable version.
Users consequently work with a small set of composable Python objects and are not required to interact with the relational layer or data storage directly.
Figure~\ref{fig:timenet_api} illustrates the ease of both consumption and authoring through the current public \acrshort{api}.

\input{api_example.tex}

\subsection{Datasets}
\label{sec:appendix:datasets}
\begin{table}[H]
\centering
\caption{The datasets, as released. Task types name the tags of TimeF's task
registry.}
\label{tab:datasets}
\footnotesize
\setlength{\tabcolsep}{3pt}
\renewcommand{\arraystretch}{1.15}

% Horizontally and vertically centered fixed-width column
\newcolumntype{C}[1]{>{\centering\arraybackslash}m{#1}}

\begin{tabular}{C{48pt}C{38pt}C{70pt}C{74pt}C{76pt}C{38pt}}
\toprule
Dataset & Size & Released format & Domains & Task types & Overlap \\
\midrule
Sleep-EDF Exp.\ & 8.1\,GB & EDF, EDF+ (int16), XLS &
  sleep, EEG, EOG, EMG, respiration &
  scalar prediction, temporal localization &
  SLIP \\

ECG-QA CoT & 1.89\,GB & WFDB (int16), CSV &
  cardiology, ECG &
  question answering &
  SLIP \\

HEARTS & 1.32\,GB & pickle (pandas DataFrame, float64) &
  metabolic, respiration, COVID cough &
  scalar prediction, forecasting, temporal localization &
  Capture-24 \\

ARFBench & 103\,MB &
  Parquet (datetime64, float64, string, int64), CSV &
  infrastructure, networking, database, security &
  & none \\

VerbalTS & 800\,MB &
  NPY (float64, UTF-32, int64), JSON &
  synthetic, energy, traffic, motion, weather &
  series generation &
  ETTm1, Weather \\

SLIP & 6.14\,GB &
  Parquet (nested float32, string) &
  health, web, nature, energy, IoT, transport, activity, stress &
  question answering &
  Sleep-EDF, PTB-XL, Capture-24 \\
\bottomrule
\end{tabular}
\end{table}

% \section{Preprocessing assumptions}
% \label{app:assumptions}

% Conversion is not neutral. Each release leaves facts underdetermined, and a
% connector that reads it has to decide. This appendix states the decisions that
% affect what the stored artifact contains, and therefore what
% Table~\ref{tab:main} measures.

% \paragraph{Sleep-EDF Expanded}
% [--]

% \paragraph{ECG-QA CoT.}
% [--]

% \paragraph{TSQA.}
% [--]

% \paragraph{HEARTS.}
% [--]

% \paragraph{ARFBench.}
% [--]

% \paragraph{VerbalTS.}
% [--]

% \paragraph{SLIP.}
% [--]

% ============================================================
% Required packages
% ============================================================
% \usepackage{amsmath,amssymb}
% \usepackage{booktabs}
% \usepackage{tabularx}
% \usepackage{enumitem}
% \usepackage[nameinlink,capitalise]{cleveref}

% Optional notation
% \newcommand{\X}{\mathbf{X}}
% \newcommand{\Y}{\mathbf{Y}}
% \newcommand{\C}{\mathbf{C}}
% \newcommand{\T}{\mathcal{T}}
% \newcommand{\task}{\tau}

% ============================================================

%% file: api_example.tex
\begin{figure}[h!t]
  \noindent
  \begin{minipage}[t]{0.32\linewidth}
  \vspace{0pt}
  \begin{apipanel}{Signals and spans}
  \input{listings/api_signals}
  \end{apipanel}
  \end{minipage}%
  \hfill
  \begin{minipage}[t]{0.32\linewidth}
  \vspace{0pt}
  \begin{apipanel}{Annotations}
  \input{listings/api_annotations}
  \end{apipanel}
  \end{minipage}%
  \hfill
  \begin{minipage}[t]{0.32\linewidth}
  \vspace{0pt}
  \begin{apipanel}{Tasks}
  \input{listings/api_tasks}
  \end{apipanel}
  \end{minipage}

  \par\vspace{2pt}
  \caption{TimeNet API: loading signals and defining spans, attaching annotations, and deriving tasks.}
  \label{fig:timenet_api}
  \end{figure}

%% file: listings/api_signals.tex
\begin{Verbatim}[commandchars=\\\{\},fontsize=\tiny]
\PY{n}{ds} \PY{o}{=} \PY{n}{TimeNet}\PY{o}{.}\PY{n}{open}\PY{p}{(}
    \PY{l+s+s2}{\PYZdq{}}\PY{l+s+s2}{org/dataset@1.0.0}\PY{l+s+s2}{\PYZdq{}}\PY{p}{)}
\PY{n+nb}{print}\PY{p}{(}\PY{n+nb}{len}\PY{p}{(}\PY{n}{ds}\PY{o}{.}\PY{n}{records}\PY{p}{)}\PY{p}{)}
\PY{n}{record} \PY{o}{=} \PY{n}{ds}\PY{p}{[}\PY{l+m+mi}{0}\PY{p}{]}

\PY{n}{eeg} \PY{o}{=} \PY{n}{record}\PY{o}{.}\PY{n}{signal}\PY{p}{(}\PY{l+s+s2}{\PYZdq{}}\PY{l+s+s2}{eeg}\PY{l+s+s2}{\PYZdq{}}\PY{p}{)}
\PY{n}{span} \PY{o}{=} \PY{n}{eeg}\PY{o}{.}\PY{n}{span}\PY{p}{(}
    \PY{n}{start}\PY{o}{=}\PY{l+m+mi}{12}\PY{p}{,} \PY{n}{end}\PY{o}{=}\PY{l+m+mi}{18}\PY{p}{,}
    \PY{n}{unit}\PY{o}{=}\PY{n}{UNIT}\PY{o}{.}\PY{n}{SECONDS}\PY{p}{)}

\PY{n}{x} \PY{o}{=} \PY{n}{span}\PY{o}{.}\PY{n}{to\PYZus{}numpy}\PY{p}{(}\PY{p}{)}
\end{Verbatim}

%% file: listings/api_annotations.tex
\begin{Verbatim}[commandchars=\\\{\},fontsize=\tiny]
\PY{n}{seizure} \PY{o}{=} \PY{n}{Annotation}\PY{p}{(}
    \PY{n}{key}\PY{o}{=}\PY{l+s+s2}{\PYZdq{}}\PY{l+s+s2}{seizure}\PY{l+s+s2}{\PYZdq{}}\PY{p}{,}
    \PY{n}{value}\PY{o}{=}\PY{l+s+s2}{\PYZdq{}}\PY{l+s+s2}{tonic\PYZhy{}clonic}\PY{l+s+s2}{\PYZdq{}}\PY{p}{,}
    \PY{n}{provenance}\PY{o}{=}\PY{l+s+s2}{\PYZdq{}}\PY{l+s+s2}{neurologist}\PY{l+s+s2}{\PYZdq{}}\PY{p}{,}
    \PY{n}{span}\PY{o}{=}\PY{n}{span}\PY{p}{)}

\PY{n}{record}\PY{o}{.}\PY{n}{annotations}\PY{o}{.}\PY{n}{add}\PY{p}{(}\PY{n}{seizure}\PY{p}{)}
\end{Verbatim}

%% file: listings/api_tasks.tex
\begin{Verbatim}[commandchars=\\\{\},fontsize=\tiny]
\PY{n}{clsTask} \PY{o}{=} \PY{n}{ClassificationTask}\PY{p}{(}
    \PY{n+nb}{input}\PY{o}{=}\PY{n}{span}\PY{p}{,}
    \PY{n}{target}\PY{o}{=}\PY{n}{seizure}\PY{p}{)}

\PY{n}{qaTask} \PY{o}{=} \PY{n}{AnswerTask}\PY{p}{(}
    \PY{n}{prompt}\PY{o}{=}\PY{l+s+s2}{\PYZdq{}}\PY{l+s+s2}{What occurs here?}\PY{l+s+s2}{\PYZdq{}}\PY{p}{,}
    \PY{n+nb}{input}\PY{o}{=}\PY{n}{span}\PY{p}{,}
    \PY{n}{target}\PY{o}{=}\PY{l+s+s2}{\PYZdq{}}\PY{l+s+s2}{Tonic\PYZhy{}clonic seiz.}\PY{l+s+s2}{\PYZdq{}}\PY{p}{,}
    \PY{n}{parents}\PY{o}{=}\PY{p}{[}\PY{n}{clsTask}\PY{p}{]}\PY{p}{)}

\PY{n}{record}\PY{o}{.}\PY{n}{add\PYZus{}tasks}\PY{p}{(}\PY{n}{clsTask}\PY{p}{,} \PY{n}{qaTask}\PY{p}{)}
\end{Verbatim}

%% file: 07_appendix_task_taxonomy.tex
\section{A Compositional Taxonomy of Temporal Tasks}
\label{app:taxonomy}

This appendix gives the formal details behind
Section~\ref{sec:taskspec}. We first define the full task specification and
support relations, then enumerate the signature space and task families.
We subsequently describe the output-type registry, task specialization,
orthogonal interaction axes, task constructors, and the current scope of
the specification.

% ============================================================
\subsection{Full task specification}
\label{app:specification}

A record $r$ owns a recursive hierarchy of Sources whose leaf Signals form
the set $K_r$. Each signal
$x_k:\mathcal{T}_k\rightarrow\mathbb{R}^{c_k}$ is defined over a regular,
irregular, or ordinal temporal axis. Annotations $\mathcal{A}_r$ may be
associated with the Record, its Sources, or its Signals.

The set of channel--time cells available in a record is

\begin{equation}
    \Omega_r
    =
    \bigcup_{k\in K_r}
    \{k\}\times\mathcal{T}_k .
\end{equation}

A \emph{view} is any subset $W\subseteq\Omega_r$. Views may span multiple
signals or, for tasks involving several temporal objects, multiple records.

\begin{definition}[Task specification]
\label{def:task}
A temporal task is represented analytically as

\begin{equation}
\tau=
\big\langle
W_{\mathrm{in}},\,C,\,
(O,W_{\mathrm{out}},g),\,
\iota,\,\varepsilon,\,\pi,\,\mu
\big\rangle ,
\end{equation}

where:

\begin{itemize}[leftmargin=*,itemsep=1pt]
    \item $W_{\mathrm{in}}$ contains the input views over one or more records;
    \item $C$ contains optional conditioning such as metadata, text, a query,
          or in-context examples;
    \item $O$ is an output type from the extensible registry
          $\mathcal{O}$ (Section~\ref{app:types});
    \item $W_{\mathrm{out}}$ specifies the temporal support to which the
          target refers;
    \item $g$ is a registered deterministic function that derives the target
          from stored signals and annotations, with identity as the default;
    \item $\iota\in\{\text{structured},\text{question},\text{instruction}\}$
          specifies the interaction interface;
    \item $\varepsilon\in\{\text{none},\text{text},\text{evidence}\}$
          specifies the rationale requirement;
    \item $\pi\in\{\text{offline},\text{streaming}\}$ specifies the access
          protocol; and
    \item $\mu$ is a scoring function, either per instance
          $\mu(\hat y,y)$ or per corpus
          $\mu(\{\hat y_i\},\{y_i\})$.
\end{itemize}
\end{definition}

For non-signal targets, $W_{\mathrm{out}}$ denotes the temporal support
\emph{referenced} by the target rather than a physical location of the
output itself. For example, a classification label may describe an ECG
interval even though the label itself has no temporal extent.

The tuple above is an analytical representation rather than a literal
serialization of the TimeF Task object. Stored Tasks provide references to
Records, targets or target Annotations, and optional prompt, scope, rationale,
and provenance fields. Other components, such as the access protocol or
evaluation metric, may be supplied by the benchmark or model-facing layer.

% ============================================================
\subsection{Support relation and arity}
\label{app:support}

Let $K_{\mathrm{out}}$ denote the target channels and let

\begin{equation}
    W_{\mathrm{in}}^\ast
    =
    W_{\mathrm{in}}
    \cap
    \left(
        \bigcup_{k\in K_{\mathrm{out}}}
        \{k\}\times\mathcal{T}_k
    \right)
\end{equation}

be the observed portion of those channels.

\begin{definition}[Support relation and arity]
\label{def:support}
The target position is

\begin{equation}
\Delta(\tau)
\in
\{
\textsc{Obs},
\textsc{Gap},
\textsc{Fut},
\textsc{Chan},
\textsc{Null}
\},
\end{equation}

with:

\begin{itemize}[leftmargin=*,itemsep=1pt]
    \item \textsc{Obs}: the target refers to support exposed in
          $W_{\mathrm{in}}$;
    \item \textsc{Gap}: the target lies within the temporal extent of the
          observed target channels but is withheld from the input;
    \item \textsc{Fut}: the target lies after the observed support of the
          target channels;
    \item \textsc{Chan}: the target channels are not observed although other
          input channels are available; and
    \item \textsc{Null}: $W_{\mathrm{in}}=\emptyset$.
\end{itemize}

Independently,

\begin{equation}
    \alpha(\tau)\in\{1,\geq 2\}
\end{equation}

denotes the number of temporal objects over which the target is jointly
defined.
\end{definition}

Known future covariates do not violate
$\Delta=\textsc{Fut}$: the relation is computed with respect to the target
channels rather than all available channels.

Because the exposed input is represented explicitly by $W_{\mathrm{in}}$,
targets outside the observed support can be withheld mechanically. Under
$\pi=\text{streaming}$, the loader additionally exposes only the prefix
available at each prediction time.

% ============================================================
\subsection{The signature grid}
\label{app:grid}

Let

\begin{equation}
    \kappa(O)
    \in
    \{
        \textsc{Value},
        \textsc{Span},
        \textsc{Signal},
        \textsc{Index}
    \}
\end{equation}

denote the semantic kind of the output type. \textsc{Value} includes labels,
real-valued outputs, and free text. Composite outputs take the most temporally
structured kind of their components, using the precedence

\begin{equation}
\textsc{Signal}
\succ
\textsc{Span}
\succ
\textsc{Index}
\succ
\textsc{Value}.
\end{equation}

The task signature is

\begin{equation}
    \phi(\tau)
    =
    \big(
        \kappa(O),
        \Delta(\tau),
        \alpha(\tau)
    \big).
\end{equation}

Let

\begin{equation}
    \Delta_{\mathrm{in}}
    =
    \{
        \textsc{Obs},
        \textsc{Gap},
        \textsc{Chan}
    \}.
\end{equation}

Table~\ref{tab:grid} enumerates the resulting signature space.

\begin{table}[h]
\centering
\scriptsize
\caption{%
Signature grid for temporal tasks. Rows denote output kind $\kappa$ and
columns denote support relation $\Delta$ for single-object tasks.
The final column contains relational tasks with $\alpha\geq2$.
[A], [L], [F], [R], [C], and [T] denote attribute inference,
temporal localization, future-directed inference, relational inference,
temporal construction, and temporal transformation, respectively.}
\label{tab:grid}
\setlength{\tabcolsep}{3pt}
\renewcommand{\arraystretch}{1.2}
\begin{tabularx}{\linewidth}{
    @{}
    l
    *{6}{>{\centering\arraybackslash}X}
    @{}
}
\toprule
&
\multicolumn{5}{c}{single object ($\alpha=1$)}
&
relational
\\
\cmidrule(lr){2-6}
\cmidrule(l){7-7}

&
\textsc{Obs}
&
\textsc{Gap}
&
\textsc{Fut}
&
\textsc{Chan}
&
\textsc{Null}
&
$\alpha\geq2$
\\

\midrule

\textsc{Value}
&
[A] classification, scalar, answer
&
[A] value over masked region
&
[F] event prediction
&
[A] cross-channel inference
&
---
&
[R] comparison, ordering
\\ \addlinespace[4pt]

\textsc{Span}
&
[L] localization, segmentation
&
[L] localization in masked region
&
[F] onset prediction
&
[L] cross-channel localization
&
---
&
[R] alignment
\\ \addlinespace[4pt]

\textsc{Signal}
&
[T] editing, denoising
&
[T] imputation
&
[F] forecasting
&
[T] virtual sensing
&
[C] generation
&
[R] signal relation
\\ \addlinespace[4pt]

\textsc{Index}
&
---
&
---
&
---
&
---
&
---
&
[R] retrieval
\\

\bottomrule
\end{tabularx}
\end{table}

Single-object \textsc{Value} or \textsc{Span} targets with
$\Delta=\textsc{Null}$ are undefined because there is no temporal object to
describe or localize. Likewise, an \textsc{Index} output requires a candidate
set and is therefore relational.

\begin{proposition}[Partition of the signature space]
\label{prop:partition}
Every well-posed signature belongs to exactly one of the six constructions
in Table~\ref{tab:taxonomy}.
\end{proposition}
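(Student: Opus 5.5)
The proof is a finite case analysis on the signature space. I will treat it in three stages: make the notion of well-posedness explicit, prove that the six regions are pairwise disjoint, and then prove that they jointly cover every well-posed signature. The ambient space is $\mathcal{S}=\{\textsc{Value},\textsc{Span},\textsc{Signal},\textsc{Index}\}\times\{\textsc{Obs},\textsc{Gap},\textsc{Fut},\textsc{Past},\textsc{Chan},\textsc{Null}\}\times\{1,\geq2\}$.

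\textbf{Stage 1: fixing the setting.} Three points need to be settled before the case analysis.
\begin{itemize}
\item I include $\textsc{Past}$, which appears in Section~\ref{sec:taskspec} and in Table~\ref{tab:taxonomy}, even though Definition~\ref{def:support} omits it. I treat it symmetrically with $\textsc{Fut}$ as a target beyond the observed support of the target channels.
\item I check that $\phi$ is single-valued. The precedence $\textsc{Signal}\succ\textsc{Span}\succ\textsc{Index}\succ\textsc{Value}$ assigns each composite output exactly one kind, so $\kappa(O)$ is a function. $\Delta(\tau)$ and $\alpha(\tau)$ are defined as single values by Definition~\ref{def:support}.
\item Following the remarks after Table~\ref{tab:grid}, I define a signature as \emph{ill-posed} exactly when either $\alpha=1$ and $\kappa=\textsc{Index}$, or $\alpha=1$, $\Delta=\textsc{Null}$ and $\kappa\in\{\textsc{Value},\textsc{Span}\}$. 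Every other element of $\mathcal{S}$ is well-posed.
\end{itemize}

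\textbf{Stage 2: disjointness.} The six regions are
\begin{align*}
\text{[A]} &= \{\textsc{Value}\}\times\Delta_{\mathrm{in}}\times\{1\}, &
\text{[L]} &= \{\textsc{Span}\}\times\Delta_{\mathrm{in}}\times\{1\},\\
\text{[F]} &= \ast\times\{\textsc{Fut},\textsc{Past}\}\times\{1\}, &
\text{[R]} &= \ast\times\ast\times\{\geq2\},\\
\text{[C]} &= \{\textsc{Signal}\}\times\{\textsc{Null}\}\times\{1\}, &
\text{[T]} &= \{\textsc{Signal}\}\times\Delta_{\mathrm{in}}\times\{1\}.
\end{align*}
The $\alpha$-coordinate separates [R] from the other five regions, all of which have $\alpha=1$. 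Among those five, the $\Delta$-coordinate falls into one of three pairwise disjoint sets: $\Delta_{\mathrm{in}}$, $\{\textsc{Fut},\textsc{Past}\}$, or $\{\textsc{Null}\}$. Together these three sets make up the full $\Delta$-domain. This separates [F] from [C], and both from $\{$[A],[L],[T]$\}$. Those last three regions share $\Delta_{\mathrm{in}}$ but fix distinct $\kappa$-values ($\textsc{Value}$, $\textsc{Span}$, $\textsc{Signal}$), so they are pairwise disjoint.

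\textbf{Stage 3: coverage.} Let $\phi=(\kappa,\Delta,\alpha)$ be well-posed.
\begin{itemize}
\item If $\alpha\geq2$, then $\phi\in$ [R].
\item If $\alpha=1$ and $\Delta\in\{\textsc{Fut},\textsc{Past}\}$, then $\phi\in$ [F].
\item If $\alpha=1$ and $\Delta=\textsc{Null}$, well-posedness excludes $\textsc{Value}$, $\textsc{Span}$ and $\textsc{Index}$. Hence $\kappa=\textsc{Signal}$ and $\phi\in$ [C].
\item If $\alpha=1$ and $\Delta\in\Delta_{\mathrm{in}}$, well-posedness excludes $\textsc{Index}$. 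The remaining values $\textsc{Value}$, $\textsc{Span}$ and $\textsc{Signal}$ land in [A], [L] and [T] respectively.
\end{itemize}
Combined with Stage 2, this shows every well-posed signature lies in exactly one region. As a cross-check, I would verify cell by cell that this agrees with the labels in Table~\ref{tab:grid}.

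\textbf{Main obstacle.} The difficulty is definitional rather than computational. The statement is only as sharp as the notion of ``well-posed'', which the paper gives informally, so I must commit to the exclusion list above. I also need to note that the wildcard $\ast$ in [F] implicitly excludes the ill-posed $(\textsc{Index},\textsc{Fut}/\textsc{Past},1)$ signatures, so [F] is understood as intersected with the well-posed set. Finally, the $\textsc{Past}$/Definition~\ref{def:support} mismatch should be flagged and resolved as above.
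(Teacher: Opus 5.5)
Your proposal is correct and follows the same decision-tree case analysis as the paper's proof: split on $\alpha$, send $\alpha\geq2$ to relational inference, isolate beyond-context support as future-directed inference, and then separate the remaining single-object cases by output kind and by \textsc{Null} versus $\Delta_{\mathrm{in}}$. Your explicit disjointness check, your precise list of ill-posed cells, and your handling of \textsc{Past} tighten the argument without changing its structure; the paper's proof works from Definition~\ref{def:support} and silently omits \textsc{Past}, even though Table~\ref{tab:taxonomy} places it in future-directed inference.
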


\begin{proof}
If $\alpha\geq2$, the task is relational inference. Otherwise
$\alpha=1$. If $\Delta=\textsc{Fut}$, the task is future-directed inference
regardless of output kind. For the remaining support relations, the output
kind distinguishes \textsc{Value}, \textsc{Span}, and \textsc{Signal}
targets. Signal targets with $\Delta=\textsc{Null}$ are temporal construction;
signal targets with $\Delta\in\Delta_{\mathrm{in}}$ are temporal
transformation. The remaining valid cases are attribute inference and
temporal localization. Ill-posed cells are excluded by definition.
\end{proof}

The two signal-producing constructions---construction and
transformation---form the broader \emph{temporal synthesis} family used in
Fig.~\ref{fig:task-tree}. Thus the six constructions induce five semantic
families.

% ============================================================
\subsection{Formal definitions of the constructions}
\label{app:families}

Let $X$ denote the selected input views and $C$ their conditioning.

\paragraph{Attribute inference.}
For

\begin{equation}
    \phi(\tau)
    =
    (\textsc{Value},\Delta_{\mathrm{in}},1),
\end{equation}

the task asks for a value describing the recorded object or a supported
region,

\begin{equation}
    f(X,C)\rightarrow y.
\end{equation}

The codomain distinguishes common task types:
$y\in\{c_1,\ldots,c_K\}$ gives classification,
$y\in\mathbb{R}^d$ gives scalar prediction, and free text gives
\texttt{answer}, including captioning or open-ended interpretation.

\paragraph{Temporal localization.}
For

\begin{equation}
    \phi(\tau)
    =
    (\textsc{Span},\Delta_{\mathrm{in}},1),
\end{equation}

the target identifies where or when a phenomenon occurs,

\begin{equation}
    f(X,q)
    \rightarrow
    \{s_1,\ldots,s_n\},
    \qquad
    s_i\subseteq\mathcal{T},
\end{equation}

where $q$ may specify the phenomenon. Examples include event localization,
anomaly localization, change-point detection, and segmentation.

\paragraph{Future-directed inference.}
For

\begin{equation}
    \phi(\tau)
    =
    (\ast,\textsc{Fut},1),
\end{equation}

the target lies beyond the observed support of the target channels,

\begin{equation}
    f(X_{\leq t},C)
    \rightarrow
    Y_{>t}.
\end{equation}

Forecasting is the signal-valued case. Value-valued targets include
future-state and event prediction, while span-valued targets include onset
prediction and related time-to-event formulations.

\paragraph{Relational inference.}
For

\begin{equation}
    \phi(\tau)
    =
    (\ast,\ast,\geq2),
\end{equation}

the target is jointly defined over multiple temporal objects,

\begin{equation}
    f(X_i,X_j,C)\rightarrow r.
\end{equation}

Examples include comparison, ordering, alignment, similarity, correspondence,
and retrieval. Temporal ordering between intervals may use Allen's interval
relations~\citep{allen1983maintaining}.

\paragraph{Temporal construction.}
For

\begin{equation}
    \phi(\tau)
    =
    (\textsc{Signal},\textsc{Null},1),
\end{equation}

the task constructs a temporal signal without an observed temporal input,

\begin{equation}
    f(C)\rightarrow X'.
\end{equation}

This includes conditional and unconditional generation.

\paragraph{Temporal transformation.}
For

\begin{equation}
    \phi(\tau)
    =
    (\textsc{Signal},\Delta_{\mathrm{in}},1),
\end{equation}

the task produces a temporal signal from an existing recorded object,

\begin{equation}
    f(X,C)\rightarrow X'.
\end{equation}

Editing and denoising correspond to observed support, imputation to a gap,
and virtual sensing to an unobserved target channel. Together with temporal
construction, these tasks form temporal synthesis.

% ============================================================
\subsection{Output-type registry}
\label{app:types}

TimeNet separates the semantic kind $\kappa$ from the concrete output type
$O$. Table~\ref{tab:types} gives representative built-in output types and
default evaluation functions.

\begin{table}[h]
\centering
\small
\caption{%
Representative output types. $^{\dagger}$Requires model or human judgment
and is therefore not directly usable as a verifiable reward.}
\label{tab:types}
\begin{tabular}{@{}lll@{}}
\toprule
Type & Kind $\kappa$ & Default $\mu$ \\
\midrule
$\textsc{Label}_K$, $\textsc{MultiLabel}_K$
    & \textsc{Value}
    & accuracy, macro-F1 \\

$\mathbb{R}^d$
    & \textsc{Value}
    & MAE, MAPE \\

\textsc{Text}
    & \textsc{Value}
    & exact match; model judge$^{\dagger}$ \\

$\textsc{Spans}(\mathcal{T})$
    & \textsc{Span}
    & event-F1, IoU \\

$\textsc{Signal}(\mathcal{T}',c)$
    & \textsc{Signal}
    & MASE, DTW, task-specific metrics \\

$\textsc{Index}_N$, $\textsc{Ranking}_N$
    & \textsc{Index}
    & accuracy, MRR, Recall@$k$ \\

\midrule

$\textsc{Dist}(O)$
    & $\kappa(O)$
    & CRPS, NLL, calibration \\

$\textsc{Set}(O_1\times O_2)$
    & highest-precedence component
    & component-wise \\

\bottomrule
\end{tabular}
\end{table}

A new output type is registered together with a validator, semantic kind, and
compatible scoring functions. Composite outputs inherit the highest-precedence
kind of their components. For example, labeled temporal segments are of kind
\textsc{Span} even though each segment also carries a categorical label.

% ============================================================
\subsection{Task specialization}
\label{app:specialization}

The signature determines the semantic construction, but does not identify
every benchmark-specific task. Let $P(\tau)$ denote constraints on the
admissible inputs, conditioning, output, and temporal support of a task,
excluding the orthogonal interface, rationale, protocol, and scoring fields.

We define

\begin{equation}
    \tau_a\preceq\tau_b
    \quad\Longleftrightarrow\quad
    \phi(\tau_a)=\phi(\tau_b)
    \;\land\;
    P(\tau_b)\subseteq P(\tau_a).
\end{equation}

Thus $\tau_a$ specializes $\tau_b$ if both perform the same semantic operation
and $\tau_a$ adds further restrictions. Examples include

\begin{equation}
\text{ECG rhythm classification}
\preceq
\text{classification},
\end{equation}

and

\begin{equation}
\text{anomaly localization}
\preceq
\text{temporal localization}.
\end{equation}

Likewise, single-step forecasting specializes forecasting by restricting the
target to one future time point, and probabilistic forecasting specializes it
by constraining the target to a predictive distribution.

The stored \texttt{from\_tasks} relation may record such derivations as
provenance. It does not require the child Task to physically copy fields from
its antecedent or itself enforce the partial order.

% ============================================================
\subsection{Orthogonal axes: interface, reasoning, and protocol}
\label{app:orthogonal}

The fields $\iota$, $\varepsilon$, and $\pi$ do not alter
$\phi(\tau)$.

\paragraph{Interaction interface.}
Question answering is an interface rather than a semantic family. For example,

\begin{align}
\text{``What state is present?''}
&\mapsto
(\textsc{Value},\textsc{Obs},1),
\\
\text{``When does the event occur?''}
&\mapsto
(\textsc{Span},\textsc{Obs},1),
\\
\text{``What happens next?''}
&\mapsto
(\ast,\textsc{Fut},1),
\\
\text{``Which event occurred first?''}
&\mapsto
(\ast,\ast,\geq2).
\end{align}

The same semantic task can therefore be exposed through a structured output,
question, or instruction without changing its family.

\paragraph{Reasoning.}
For a base task

\begin{equation}
    f(X,C)\rightarrow Y,
\end{equation}

a rationale requirement may additionally request

\begin{equation}
    f(X,C)\rightarrow(Y,E),
\end{equation}

where $E$ contains evidence, intermediate conclusions, or an explanation.
This does not change the semantic signature of the underlying task.

Evidence can itself be constrained, for example to stored spans or
annotations, allowing explanation quality and target correctness to be
evaluated separately.

\paragraph{Protocol.}
Under $\pi=\text{streaming}$, predictions are made at times
$t_1<t_2<\cdots$, with each prediction receiving only the input prefix
available at that point. Online anomaly detection and early classification
are therefore protocol variants of localization and attribute inference
rather than separate semantic families.

% ============================================================
\subsection{A compositional task algebra}
\label{app:algebra}

Conceptually, a task factorizes as

\begin{equation}
\text{Task}
=
\underbrace{\phi(\tau)}_{\text{semantic construction}}
\times
\underbrace{P(\tau)}_{\text{specialization}}
\times
\underbrace{
    \iota\times\varepsilon\times\pi
}_{\text{presentation and protocol}}
\times
\underbrace{\mu}_{\text{scoring}} .
\end{equation}

The signature determines the semantic operation, $P(\tau)$ determines the
task specialization, the orthogonal axes determine how the task is exposed
and supervised, and $\mu$ determines how predictions are evaluated.

% ============================================================
\subsection{Task constructors}
\label{app:constructors}

The following constructors describe common transformations between task
specifications. They formalize relationships between tasks rather than
requiring corresponding public SDK operations.

\begin{definition}[Task constructors]
\label{def:constructors}
Given a valid task specification $\tau$:

\begin{itemize}[leftmargin=*,itemsep=1pt]

    \item $\textsc{Ask}_q$ changes the interface to a natural-language
          question;

    \item $\textsc{Explain}$ adds a rationale requirement;

    \item $\textsc{Stream}$ changes the access protocol to streaming;

    \item $\textsc{Shift}_h$ withholds the final $h$ units of target-channel
          support from the input, moving the target to
          $\Delta=\textsc{Fut}$;

    \item $\textsc{Mask}_M$ removes cells $M$ from the input and uses them as
          targets, producing \textsc{Gap} or \textsc{Chan} support;

    \item $\textsc{Reduce}_g$ maps a structured target to a value, e.g.,
          existence, count, duration, or summary statistics;

    \item $\textsc{Pair}_\rho$ combines temporal objects through a relation
          $\rho$, producing a relational task; and

    \item $\textsc{Restrict}$ introduces additional constraints while
          preserving the signature, i.e., specialization under $\preceq$.

\end{itemize}
\end{definition}

Thus \textsc{Shift}, \textsc{Mask}, \textsc{Reduce}, and \textsc{Pair}
may move a task between regions of the signature space;
\textsc{Restrict} moves within a region; and
\textsc{Ask}, \textsc{Explain}, and \textsc{Stream} alter only orthogonal
axes.

For example, a language-conditioned anomaly-localization task may be written
conceptually as

\begin{equation}
    \textsc{Ask}
    \big(
        \textsc{Restrict}
        (
            \texttt{temporal localization}
        )
    \big),
\end{equation}

while forecasting with an explanation is

\begin{equation}
    \textsc{Explain}
    (
        \texttt{forecasting}
    ).
\end{equation}

% ============================================================
\subsection{Constructor preconditions}
\label{app:preconditions}

Constructors are only valid when the supervision required by the resulting
task exists.

\begin{center}
\small
\begin{tabularx}{\linewidth}{@{}l l X@{}}
\toprule
Constructor & Effect & Precondition \\
\midrule

$\textsc{Ask}_q$
&
$\iota\leftarrow\text{question}$
&
The question determines the intended target unambiguously.
\\

$\textsc{Explain}$
&
$\varepsilon\leftarrow\text{text/evidence}$
&
A rationale target exists in stored annotations, or the rationale is
evaluation-only.
\\

$\textsc{Stream}$
&
$\pi\leftarrow\text{streaming}$
&
The task and metric are defined for predictions from prefixes.
\\

$\textsc{Shift}_h$
&
$\Delta\leftarrow\textsc{Fut}$
&
The target has temporal support and sufficient target-channel history remains
before the prediction boundary.
\\

$\textsc{Mask}_M$
&
$\Delta\leftarrow\textsc{Gap}/\textsc{Chan}$
&
$M$ contains cells for which ground truth is stored.
\\

$\textsc{Reduce}_g$
&
$\kappa\leftarrow\textsc{Value}$
&
$g$ is deterministic and its codomain is registered in $\mathcal{O}$.
\\

$\textsc{Pair}_\rho$
&
$\alpha\leftarrow\geq2$
&
$\rho$ is defined for the paired target types.
\\

$\textsc{Restrict}$
&
$\preceq$
&
Each restricted field is a subset of the corresponding parent constraint and
the resulting task has at least one valid instance.
\\

\bottomrule
\end{tabularx}
\end{center}

Type validity does not imply semantic usefulness. For example, mechanically
shifting a record-level diagnosis does not necessarily define a meaningful
prediction problem. Derived tasks should therefore retain their derivation as
provenance, and task constructors must satisfy domain-specific as well as
type-level validity conditions.

\begin{proposition}[Type closure]
\label{prop:closure}
If $\tau$ is valid and a constructor's preconditions hold, the constructed
task is also type-valid.
\end{proposition}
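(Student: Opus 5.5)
The argument is a case analysis over the eight constructors of Definition~\ref{def:constructors}. We first make \emph{type-valid} precise. We read it as the conjunction of four conditions on $\tau=\langle W_{\mathrm{in}},C,(O,W_{\mathrm{out}},g),\iota,\varepsilon,\pi,\mu\rangle$:
\begin{itemize}
\item[(i)] $O\in\mathcal{O}$, and $g$ is a registered deterministic function whose codomain matches $O$;
\item[(ii)] $\mu$ is a scoring function registered as compatible with $O$ (and, where relevant, with $\pi$);
\item[(iii)] $W_{\mathrm{in}}$, $W_{\mathrm{out}}$ and the target channels $K_{\mathrm{out}}$ determine a well-defined pair $(\Delta(\tau),\alpha(\tau))$ as in Definition~\ref{def:support};
\item[(iv)] the resulting signature $\phi(\tau)$ is well-posed, i.e.\ it is not one of the excluded cells of Table~\ref{tab:grid}.
\end{itemize}
By Proposition~\ref{prop:partition}, condition (iv) is equivalent to $\phi(\tau)$ lying in exactly one construction. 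For each constructor we then show that (i)--(iv) for $\tau$, together with the precondition in the table of Appendix~\ref{app:preconditions}, imply (i)--(iv) for the constructed task.

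The constructors fall into three groups, handled in increasing difficulty.

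\emph{Orthogonal constructors} ($\textsc{Ask}_q$, $\textsc{Explain}$, $\textsc{Stream}$). By Appendix~\ref{app:orthogonal}, these alter only $\iota$, $\varepsilon$ or $\pi$, so $O$, $g$, $W_{\mathrm{in}}$, $W_{\mathrm{out}}$ and hence $\phi$ are unchanged, and (i), (iii), (iv) are inherited. For (ii), the $\textsc{Explain}$ precondition supplies a rationale target, or makes the rationale evaluation-only so that the original $\mu$ still applies. The $\textsc{Stream}$ precondition states that the metric is defined on prefixes.

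\emph{Restriction.} $\textsc{Restrict}$ preserves $\phi$ by the definition of $\preceq$. It shrinks each field to a subset of the parent constraint, so registered types and scores remain applicable. The precondition that at least one valid instance exists rules out the vacuous case.

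\emph{Signature-moving constructors} ($\textsc{Shift}_h$, $\textsc{Mask}_M$, $\textsc{Reduce}_g$, $\textsc{Pair}_\rho$). Here we recompute $\phi$ and check that it lands in a non-excluded cell.
\begin{itemize}
\item $\textsc{Shift}_h$: the target has temporal support and sufficient history remains, so $W_{\mathrm{in}}^\ast\neq\emptyset$ and the target lies strictly after it. Hence $\Delta=\textsc{Fut}$ with $\alpha=1$, and every $\kappa$ in the \textsc{Fut} column is well-posed, except \textsc{Index}, which cannot occur since \textsc{Index} forces $\alpha\ge2$. The targets are the withheld stored cells, so $g$ is unchanged.
\item $\textsc{Mask}_M$: ground truth is stored on $M$, so taking $g$ to be a restriction of the identity to $M$ is registered. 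Removing $M$ leaves either the same channels with a hole, giving \textsc{Gap}, or other channels only, giving \textsc{Chan}. Both lie in $\Delta_{\mathrm{in}}$, which is well-posed for \textsc{Value}, \textsc{Span} and \textsc{Signal}. Here we must note that $\textsc{Mask}$ on a \textsc{Null} task is impossible, since $W_{\mathrm{in}}=\emptyset$ leaves nothing to remove.
\item $\textsc{Reduce}_g$: composing a deterministic registered $g$ with the old derivation is deterministic, and its codomain is registered, so it carries compatible scores. The new kind is \textsc{Value}, and \textsc{Value} is well-posed for every $\Delta$ except \textsc{Null}.
\item $\textsc{Pair}_\rho$: this sets $\alpha\ge2$, and the whole relational column is well-posed. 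The precondition that $\rho$ is defined on the paired target types gives a registered output type.
\end{itemize}

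The main obstacle is that the listed preconditions do not obviously exclude the two degenerate cases $\textsc{Reduce}$ applied to a temporal-construction task (which would yield the excluded cell $(\textsc{Value},\textsc{Null},1)$) and a $\textsc{Shift}$ that empties the target-channel input. Our first approach is to argue that these are already excluded by the stated preconditions. A generated signal has no stored reference input, and a value derived from it describes no temporal object, so no registered $g$ produces a well-posed target. The ``sufficient history'' clause of $\textsc{Shift}$ forbids the empty case. If this reading proves too weak, we state explicitly that the preconditions include well-posedness of the resulting signature in the sense of Appendix~\ref{app:grid}, and then conclude by Proposition~\ref{prop:partition}.
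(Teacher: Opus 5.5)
The paper states this proposition without proof and offers only the precondition table as justification. A constructor-by-constructor check like yours is therefore the natural way to substantiate it, and most of your cases go through. The obstacle you flag at the end, however, is a genuine gap, and neither of your repairs closes it.

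With condition (iv) included in type-validity, the statement is false for $\textsc{Reduce}_g$ applied to a temporal-construction task. A generation task $(\textsc{Signal},\textsc{Null},1)$ has no input, but it does store a reference \emph{target} signal (VerbalTS is an example). $\textsc{Reduce}$ acts on the target, not the input. Taking $g$ to be the mean or duration of that signal gives a deterministic map into $\mathbb{R}^d$, which is a registered type (Table~\ref{tab:types}), so the precondition holds verbatim. Because $W_{\mathrm{in}}=\emptyset$ is untouched, the result has signature $(\textsc{Value},\textsc{Null},1)$, which Table~\ref{tab:grid} marks as undefined.

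Your first approach conflates the precondition, which constrains only $g$, with well-posedness of the result. The sentence ``no registered $g$ produces a well-posed target'' restates the failure; it does not show that the precondition is violated. Your fallback makes well-posedness of the new signature a hypothesis, which proves a different proposition. The closing appeal to Proposition~\ref{prop:partition} then adds nothing, because that result presupposes well-posedness.

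To repair the argument you must commit to a definition of type-valid. The paper's wording points to a typing notion like your (i)--(iii): it says ``type-valid'' rather than ``valid'', and notes that ``type validity does not imply semantic usefulness''. Under that notion, each case is discharged directly by its precondition; for $\textsc{Reduce}$ the precondition literally says the codomain is registered. Moreover, $(\textsc{Null},1)$ is a perfectly well-defined support pair, so the obstacle disappears: the reduced generation task is type-valid but lies outside the six constructions. If you instead keep (iv), you must state that the proposition needs an extra precondition for $\textsc{Reduce}$, namely $\Delta(\tau)\neq\textsc{Null}$. That amounts to correcting the paper's table rather than deriving the claim from it.

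Two smaller points. First, your $\textsc{Mask}$ dichotomy is not exhaustive. A terminal $M$ yields \textsc{Fut} under Definition~\ref{def:support}, which is harmless. $M=W_{\mathrm{in}}$ yields \textsc{Null}, and that case is ruled out only by the constructor's definition (``producing \textsc{Gap} or \textsc{Chan} support''), not by its precondition, so cite the definition. Second, for $\textsc{Pair}_\rho$ the precondition says only that $\rho$ is defined on the paired types. That its codomain is a registered output type is an assumption you should state explicitly, not something the precondition gives you.
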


Consequently, a registry may expose many task views over the same underlying
recordings. This growth does not require duplicating the source signals and,
when the required supervision is already stored, does not require additional
labeling.

% ============================================================
\subsection{Scope and limitations}
\label{app:scope}

The current specification covers supervised and self-supervised mappings from
temporal views to typed targets. For example, masked reconstruction and
next-step prediction can be represented through \textsc{Mask} and
\textsc{Shift} applied to signal targets.

The following settings are not yet first-class:

\begin{enumerate}[leftmargin=*,itemsep=1pt]

    \item \emph{Interventional and counterfactual tasks.}
    Conditioning may describe an intervention, but the current support
    relation does not distinguish observational from interventional targets.

    \item \emph{Sequential decision-making.}
    Tasks whose outputs are actions that change subsequent observations
    require an interactive environment rather than a fixed stored target.

    \item \emph{Multi-turn and agentic interaction.}
    Dialogue over a record, tool use, and iterative observation/action loops
    are outside the current task specification.

    \item \emph{Structured outputs beyond the registry.}
    Examples include causal graphs over channels or hierarchical event
    structures. Such targets can be incorporated by registering new output
    types, but are not currently built in.

\end{enumerate}

The specification is therefore intended to be extensible rather than
exhaustive: new output types, relations, and task constraints can be added
without changing the underlying representation of temporal recordings.